\def\BibTeX{{\rm B\kern-.05em{\sc i\kern-.025em b}\kern-.08emT\kern-.1667em\lower.7ex\hbox{E}\kern-.125emX}}
\newcommand{\mt}{\mathsf{T}}
\newcommand{\compilehidecomments}{false}
	\newcommand{\wei}[1]{}
\newcommand{\wei}[1]{{\color{blue!50!black}  [\text{Wei:} #1]}}
	\newcommand{\qingyun}[1]{}
\newcommand{\qingyun}[1]{{\color{green!50!black}  [\text{Qingyun:} #1]}}
	\newcommand{\summary}[1]{}
\newcommand{\summary}[1]{{\color{red!60!black}  [\text{Summary:} #1]}}
	\newcommand{\huazheng}[1]{}
\newcommand{\huazheng}[1]{{\color{blue!50!black}  [\text{Huazheng:} #1]}}
	\newcommand{\revision}[1]{}
\newcommand{\revision}[1]{{\color{red!50!black}  \text{} #1}}
\DeclareMathOperator*{\argmax}{arg\,max}
\def \btheta {\bm \theta}
\def \btheta {\mathrm{\boldsymbol{\theta}}}
\def \bA {\mathbf{A}}
\def \bb {\mathbf{b}}
\def \bC {\mathbf{C}}
\def \cC {\mathcal{C}}
\def \bd {\mathbf{d}}
\def \bP {\mathbf{P}}
\def \bI {\mathbf{I}}
\def \bS {\mathbf{S}}
\def \cS {\mathcal{S}}
\def \cE {\mathcal{E}}
\def \cV {\mathcal{V}}
\def \Beta {\bm{\beta}}
\def \bbE {\mathbb{E}}
\def \bbP {\mathbb{P}}
\def \giving {\text{giving}}
\def \receiving {\text{receiving}}
\def \textin {\text{in}}
\def \textout {\text{out}}
\begin{document}

\copyrightyear{2019} 
\acmYear{2019} 
\setcopyright{acmcopyright}
\acmConference[KDD '19]{The 25th ACM SIGKDD Conference on Knowledge Discovery and Data Mining}{August 4--8, 2019}{Anchorage, AK, USA}
\acmPrice{15.00}
\acmDOI{10.1145/3292500.3330874}
\acmISBN{978-1-4503-6201-6/19/08}

\title{Factorization Bandits for Online Influence Maximization}

\author{Qingyun Wu}
\affiliation{%
  \institution{ University of Virginia }
  \streetaddress{85 Engineer's Way}
  \city{Charlottesville} 
  \state{VA, USA} 
  \postcode{22904}
}
\email{qw2ky@virginia.edu}

\author{Zhige Li}
\affiliation{%
  \institution{ Shanghai Jiao Tong University}
  \city{Shanghai} 
  \state{China} 
}
\email{l-zhige@outlook.com}

\author{Huazheng Wang}
\affiliation{%
  \institution{ University of Virginia }
  \streetaddress{85 Engineer's Way}
  \city{Charlottesville} 
  \state{VA, USA} 
  \postcode{22904}
}
\email{hw7ww@virginia.edu}

\author{Wei Chen}
\affiliation{%
  \institution{ Microsoft Research }
  \city{Beijing}
  \state{China} 
}
\email{weic@microsoft.com}

\author{Hongning Wang}
\affiliation{%
  \institution{ University of Virginia }
  \city{Charlottesville} 
  \state{VA, USA} 
}
\email{hw5x@virginia.edu}	

\begin{abstract}
    In this paper, we study the problem of \emph{online influence maximization} in social networks. In this problem, a learner aims to identify the set of ``best influencers'' in a network by interacting with the network, i.e., repeatedly selecting seed nodes and observing activation feedback in the network. We capitalize on an important property of the influence maximization problem named \textbf{\emph{network assortativity}}, which is ignored by most existing works in online influence maximization. To realize network assortativity, we factorize the activation probability on the edges into latent factors on the corresponding nodes, including influence factor on the giving nodes and susceptibility factor on the receiving nodes. We propose an upper confidence bound based online learning solution to estimate the latent factors, and therefore the activation probabilities. Considerable regret reduction is achieved by our factorization based online influence maximization algorithm. Extensive empirical evaluations on two real-world networks showed the effectiveness of our proposed solution. 
\end{abstract}

%
%

\begin{CCSXML}
<ccs2012>
<concept>
<concept_id>10002951.10003260.10003272.10003276</concept_id>
<concept_desc>Information systems~Social advertising</concept_desc>
<concept_significance>500</concept_significance>
</concept>
<concept>
<concept_id>10003752.10003809.10010047.10010048</concept_id>
<concept_desc>Theory of computation~Online learning algorithms</concept_desc>
<concept_significance>500</concept_significance>
</concept>
<concept>
<concept_id>10003752.10010070.10010099.10003292</concept_id>
<concept_desc>Theory of computation~Social networks</concept_desc>
<concept_significance>500</concept_significance>
</concept>
</ccs2012>
\end{CCSXML}

\ccsdesc[500]{Information systems~Social advertising}
\ccsdesc[500]{Theory of computation~Online learning algorithms}
\ccsdesc[500]{Theory of computation~Social networks}

\keywords{Online influence maximization; Factorization bandit; Network assortativity; Regret analysis}

\maketitle

\section{Introduction}

Online social networks play a vital role in the spread of information, ideas, and influence among people in their modern life \cite{mislove2007measurement,mcpherson2001birds}. They have been actively utilized as a dissemination and marketing platform. For instance, in viral marketing, a marketer tries to select a set of customers with great influence for a new product promotion. 
With a fixed budget on the number of selections, a marketer aims to maximize the spread of this influence, which is referred to as the influence maximization problem \cite{kempe2003maximizing,kitsak2010identification,centola2007complex}. In typical solutions for influence maximization, a social network is modeled as a graph with nodes representing users and edges, associated with the activation probability, representing the connections or relationship between users. Influence is propagated through the  network under a specific diffusion model, such as independent cascade model and linear threshold model \cite{kempe2003maximizing}. 

Most existing influence maximization solutions assume the activation probability is known beforehand. However, in many real-world social networks, this information is not observable. Solutions have been proposed to estimate the activation probabilities from a set of cascades which consist of logged actions by the network users in the past \cite{goyal2010learning, Netrapalli:2012:LGE:2318857.2254783,Saito:2008:PID:1430307.1430318, bourigault2016representation}. However, as the observations are independently collected from the learning algorithms, bias might be introduced by the logging mechanism \cite{morgan2014counterfactuals,el2012social}. 
To combat with biases in offline influence estimation, Aral and Walker \cite{Aral2018} proposed randomized online experiments for offline data collection. But it is usually very expensive to carry out such experiments. 
This motivates the studies of online influence maximization \cite{chen2013combinatorial, Chen:2016:CMB:2946645.2946695, vaswani2015influence, wen2017online, vaswani2017model}, in which seed nodes are purposely selected by a learning agent to improve its quality in influence estimation  and influencer selection on the fly. The foundation of this line of solutions is the combinatorial bandits \cite{chen2013combinatorial}, in which a set of arms are pulled together at each round, and the outcome is only revealed as a whole over the set of pulled arms. Mapping it back to the influence maximization problem, each node in the network is considered as an arm, and at each round the received reward on the selected set of seed nodes is the number of their activated nodes.    


Most of existing online influence maximization solutions \cite{chen2013combinatorial, Chen:2016:CMB:2946645.2946695} model the activation probability on the edges independently, which unfortunately cannot capture how social influence forms 
in real networks \cite{newman2002assortative,Aral337}. 
To clarify, here we are referring to the independence/dependence in the estimation of influence across network edges instead of how influence diffuses.  
First, this independence assumption prevents the model from realizing the dependency among the influence pattern on related nodes, i.e., the so-called assortative mixing \cite{newman2002assortative}.
For example, in ``influentials hypothesis'' \cite{watts2007influentials} the tendency that a user is likely to influence his/her neighbors is defined as the \emph{influence} of this node and the tendency that he/she is likely to be influenced is defined as the node's \emph{susceptibility}. In Aral and Walker's work \cite{Aral337}, via randomized experiments, they found that a node's influence should be separated from its susceptibility: influential individuals are less susceptible to be influenced than non-influential individuals. This directly suggests the activation probability on an edge should be modeled by both its end nodes' influence and susceptibility, and thus edges that share the same set of giving or receiving nodes are no longer independent from each other. 
Second, the distribution of influence and susceptibility over network nodes is heterogeneous. It is also reported in \cite{Aral337} that influential individuals cluster in the network while susceptible individuals do not. Hence, a joint modeling of nodes' influence and susceptibility is necessary. In a recent work, Aral and Dhillon \cite{Aral2018} showed that failing to differentiate influence and susceptibility across network nodes caused an influence maximization algorithm to underestimate influence propagation by $21.7\%$ on average, for a fixed seed set size. To overcome the issue, some online influence maximization algorithms \cite{wen2017online, vaswani2017model, pmlr-v80-kalimeris18a} introduced edge-level features to help the modeling of influence probability. However, as revealed in \cite{Aral337}, the factors affecting users' influence and susceptibility may include age, gender, marital status and many other sensitive attributes, which can hardly be exhausted and are often prohibited under privacy constraints. Third, edge-level estimation of activation probability costs an algorithm both high computational complexity and sample complexity \cite{Chen:2016:CMB:2946645.2946695}, given the number of edges in a network is usually significantly larger than the number of nodes. Utilizing the dependency in the influence probability across edges is expected to reduce the complexity.    



In this paper, we propose to model the dependency of activation probabilities on the edges for online influence maximization. Specifically, we assume each network node's influence and susceptibility are distinct and individually specified; and the activation probability on an edge is jointly determined by the giving node's influence and receiving node's susceptibility.    
This makes the activation probability matrix for the network low-rank. Then with such a low-rank structure, 
we propose a factorization based bandit solution to learn the latent \emph{influence factors} and \emph{susceptibility factors} on the nodes from the interactions with the environment. 
There are three important advantages of our factorization based bandit solution for online influence maximization. Firstly, it is able to capture the assortative mixing property of influence distribution in a network. Secondly, by directly learning the node-level parameters, the activation observation from one edge can be readily leveraged to other edges that share the same node. This reduces sample complexity for online learning. Third, comparing to existing online influence maximization solution with linear generation assumptions \cite{wen2017online, vaswani2017model, pmlr-v80-kalimeris18a}, our solution does not depend on the availability of manually constructed edge- or node-level features; it learns the property of network nodes via factorization. A rigorous theoretical analysis on the upper regret bound of the proposed solution is provided, where we prove considerable regret reduction comparing to solutions that model the activation probability on edges independently. Extensive empirical evaluations on two large-scale networks confirmed the effectiveness of our proposed solution.

\section{Related Work}

The problem of influence maximization has been extensively studied in offline settings \cite{kempe2003maximizing,chen2009efficient,chen2010scalable}, where the main focus is on computational efficiency in optimization. These solutions simply assume the influence model, i.e., the activation probabilities, can be specified by the network's properties (e.g., node degree), or by a transmission parameter that is specified as constant, random or drawn from a uniform distribution, or by estimations from logged propagation data \cite{bourigault2016representation,goyal2010learning, Netrapalli:2012:LGE:2318857.2254783,Saito:2008:PID:1430307.1430318}. However, such an overly simplified influence model ignores many important properties of real network influence patterns, e.g., assortativity \cite{newman2002assortative}.  
To conquer these limitations, online influence maximization has been studied under different assumptions and settings.
Chen et al. \cite {chen2013combinatorial} and Wang and Chen \cite{wang2017improving} formulated it as a combinatorial bandit problem and proposed an upper confidence bound \cite{auer2002using} based algorithm to estimate activation probabilities on a per-edge basis. Lei et al. \cite{lei2015online} studied a different objective of maximizing the number of unique activated nodes across multiple rounds of activation. In \cite{vaswani2015influence}, a bandit learning algorithm with both node-level (which nodes are eventually activated) and edge-level (who activated those nodes) feedback was developed. Methods similar to \cite{goyal2010learning, Netrapalli:2012:LGE:2318857.2254783,Saito:2008:PID:1430307.1430318} were used to map the node-level feedback back to edge-level activation probability estimation.  Wen et al. \cite{wen2017online} assumed the activation probability is a linear combination of edge-level features and an unknown global activation parameter, and proposed a linear contextual bandit solution. However, in practice it is very difficult to exhaustively specify the features for influence modeling on every edge, and many of those features are prohibited under privacy constraints, such as age, gender and martial status \cite{Aral337}. 

Our solution falls into another line of online influence maximization research that estimates influence parameters at the node level. 
Carpentier and Valko \cite{carpentier2016revealing} proposed a minimax optimal algorithm under a local model of the influence spread, where a source node can only activate its neighbors without considering influence cascade. Vaswani et al. \cite{vaswani2017model} proposed a diffusion model independent algorithm to estimate pairwise reachability between all pairs of nodes in a network. 
Olkhovskaya et al. \cite{olkhovskaya2018online} developed an algorithm for node-level feedback with a theoretical guarantee. But their algorithm is only designed for graphs with specific structures. 
In general, previous works in this line of research do not explicitly separate influence and susceptibility of nodes, which causes seriously degraded estimation of influence propagation, as reported in \cite{Aral2018}.
To the best of our knowledge, ours is the first  to model the structural dependency in edge activation probabilities for online influence maximization. We assume the activation probability of an edge is jointly determined by the influence of its giving node and the susceptibility of its receiving node. We estimate the influence and susceptibility parameters of nodes via edge-level activation feedback, without the need of edge features. This directly leads to reduced computational complexity in model estimation and reduced regret in influence maximization.


\section{Method}
In this section, we discuss our developed factorization-based bandit solution for online influence maximization. We first illustrate the problem setup, then introduce the details of our solution, and conclude this section by comparing our solution with existing algorithms to demonstrate its unique advantages.    

\subsection{Problem Setup}

\subsubsection{Influence maximization}
In an influence maximization problem, an input social network is modeled as a probabilistic directed graph $G = (\cV;\cE; \bP)$ with nodes $\cV$ over a set of users, edges $\cE$ for the set of directed connections between users, and activation probabilities $\bP = (p_{e_1}, p_{e_2}, ..., p_{e_{|\cE|}})$ where $p_e \in [0,1]$ represents the probability that the receiving node of edge $e$ will be activated by the giving node on $e$. 
We describe how influence propagates from nodes to their network neighbors with a stochastic diffusion model $D$. 
The influence spread of the selected seed node set $\cS$ with activation probability $\bP$ is the expected number of nodes activated by $\cS$ under the diffusion model $D$, denoted by $f_{D,\bP}(\cS)$. Given the graph $G$, the IM problem aims to find a particular set $\cS$ under a cardinality constraint $K$, which maximizes the influence spread $\cS^{opt}=\argmax_{|\cS|\leq K} f_{D, \bP}(\cS)$. This problem is known to be NP-hard~\cite{nemhauser1978analysis,kempe2003maximizing}, but it can be efficiently solved with approximations \cite{Tang:2014:IMN:2588555.2593670,Tang:2015:IMN:2723372.2723734,Nguyen:2016:SOS:2882903.2915207}. In this paper, such influence maximization algorithms are treated as an oracle, which takes a given graph, seed set size and activation probability as input, and outputs an appropriate set of seeds. Define ${\cS^*}=ORACLE(G,K,\bP)$ as the solution from the oracle. It serves as an $(\alpha,\gamma)$-approximation of $\cS^{opt}$, where $f_{D,\bP}(\cS^*)\geq \gamma f_{D,\bP}(\cS^{opt})$ with probability at least $\alpha$  \cite{chen2013combinatorial}. For example, we have $\gamma = 1-\frac{1}{e}-\varepsilon$ and $\alpha=1-1/|\cV|$ for most of influence maximization solutions \cite{kempe2003maximizing,chen2009efficient,chen2010scalable}, where $e$ is the base of natural logarithm and $\varepsilon$ depends on the accuracy of their Monte-Carlo estimate. 


\subsubsection{Online influence maximization with bandit}
Traditional works in influence maximization \cite{kempe2003maximizing,chen2009efficient,chen2010scalable} assume that in addition to the network structure, the algorithm either knows the per-edge activation probability or the probability can be specified from past propagation data \cite{goyal2010learning, Netrapalli:2012:LGE:2318857.2254783,Saito:2008:PID:1430307.1430318}. However, as we discussed before this assumption imposes various limitations for the practical use of influence maximization solutions; and it is preferred for the learner to estimate the activation probabilities by directly interacting with the network, i.e., online influence maximization. In each round of online influence maximization, the learner needs to choose seeds to 1) maximize influence spread (i.e., exploitation), and 2) improve its knowledge of the activation probabilities via feedback (i.e., exploration). Multi-armed bandit framework is thus a natural choice to handle the exploration-exploitation trade-off for online influence maximization \cite{vaswani2015influence,wen2017online}.


Formally, at each round $t$ during online influence maximization, the learner first chooses a seed node set ${\cS}_t \in \cV$ with cardinality $K$ by a predefined oracle based on its current knowledge from past observations: ${\cS}_t=ORACLE(G,K,\hat\bP_t)$, where $\hat\bP_t$ is the learner's current estimate of activation probability $\bP$. Influence then diffuses from the nodes in ${\cS}_t$ according to the diffusion model $D$. For example, under the Independent Cascade (IC) diffusion model, this can be interpreted as the environment generates a binary response $y_{e,t}$ by independently sampling $y_{e,t} \sim \text{Bern}(p_{e})$ for each concerned edge $e$ in the resulting cascade. Note the environment is always using ground-truth activation probability $\bP$. Then the learner receives the reward $f_{\text{IC}, \bP}({\cS}_t)$ which is the number of activated nodes by ${\cS}_t$. We should emphasize that for any edge $e \in \cE$, the learner observes the realization of $y_{e,t}$ if and only if its giving node is activated. To clarify, the active nodes deactivate at each time step, which makes it different from the setting in adaptive influence maximization \cite{golovin2011adaptive}.   With such edge-level feedback, the learner updates its estimate about the edge activation probability. The learner's objective is to maximize the expected cumulative reward over a finite $T$ rounds of interactions.

\subsection{A Factorization Bandit Solution}
In most of existing bandit-based online influence maximization solutions, the learner estimates the activation probabilities on all edges independently \cite{chen2013combinatorial,Chen:2016:CMB:2946645.2946695}, which cannot capture how the influence between nodes is formed and thus ignores the underlying structure behind the activation probabilities of edges. 
As suggested in a recent work \cite{Aral2018}, modeling nodes' influence and susceptibility is vital for influence maximization. In this work, we explicitly model this influence structure by assuming activation probability $p_{e} \in [0,1]$ on edge $e$ can be decomposed into two $d$-dimensional latent factors on the giving node and receiving node, i.e.,
\begin{align}  \label{eq:activation_assumption}
  p_{e}= \btheta_{g_e}^{\mt} \Beta_{r_e},
\end{align}
where $g_e$ and $r_e$ denote the giving node and receiving node on edge $e$ respectively. $\btheta_{v} \in \mathbb{R}^d$ is considered as the \emph{influence factor} on node $v$ and  $\Beta_{v} \in \mathbb{R}^d$ is the \emph{susceptibility factor} on node $v$.  This essentially imposes a low-rank structure assumption on the activation probability $\bP$ and considers the activation probabilities as a reflection of influence and susceptibility properties of the connected nodes. It naturally captures the assortativity of the influence network: users who are more influential, i.e., the nodes that have larger values on $\btheta_v$, trend to associate with larger activation probability on their neighboring node. The same for users who are more susceptible (e.g., measured by $\Beta_v$). 

In this work, we assume the diffusion follows the independent cascade model \cite{kempe2003maximizing}, and the edge-level feedback $y_{e,t}$ is observable to the learner. The estimation of activation probability $\bP$ can thus be obtained by an online factorization of the sequential observations about edge activations in the network. 
As a result, instead of learning the edge activation probabilities directly, the complexity of which is $O(|\cE|)$, our solution learns the underlying influence factor $\btheta_v$ and susceptibility factor $\Beta_v$ for all the nodes. This reduces the model complexity to $O(d|\cV|)$, which is considerably smaller than $O(|\cE|)$, especially in a large and dense graph.
To better illustrate our proposed learning solution, we introduce the following definitions about observed edges and observed nodes. 
\begin{definition}[Observed edge]
For any round $t$, a directed edge $e$ is considered as observed if and only if its start node is activated. 
 \end{definition}
Note that an edge is observed does not necessarily mean the edge is active (i.e., not necessarily $y_{e,t} =1$). With the observed edges, a set of nodes can be classified as observed, which is defined as follows, 
\begin{definition}[Observed node]
A node $v$ is observed if and only if at least one of its giving-neighbor nodes is active.
\end{definition}
Based on the above definition of observed edges and observed nodes, we can realize the relationship between observed edges and observed nodes: the giving and receiving nodes on an observed edges are both observed nodes. In this case, once an activation observation $y_{e,t}=1$ is obtained on edge $e$ at time $t$, statistics about the influence factor on its giving node $g_e$ and susceptibility factor on its receiving node $r_e$ can be updated. More importantly, all observed edges can be used for model update, which enables observation propagation to unobserved edges. We will discuss the benefit of this aspect with more details in Section \ref{sec_comparison}.

Due to the coupling between $\btheta$ and $\Beta$ imposed in the influence structure assumption in Eq \eqref{eq:activation_assumption}, we appeal to a coordinate decent algorithm built on matrix factorization to estimate the latent influence factor $\btheta_{v}$ and susceptibility factor $\Beta_{v}$ for each node $v \in \cV$. Specifically, the objective function for our factor estimation can be written as follows,
\begin{equation} \label{eq:obj_func}
   \min_{\{\btheta_{v},\Beta_{v}\}_{v \in \cV} }\sum^T_{t=1} \sum_{e \in \tilde \cE_{t}} ( \btheta_{g_e}^\mt \Beta_{r_e} -  y_{e,t} )^2  \!\!+ \lambda_1 \sum_{v \in \cV} \lVert \btheta_{v} \rVert_2 \!+ \lambda_2 \sum_{v \in \cV} \lVert \Beta_{v}  \rVert_2 
\end{equation}
where $\tilde \cE_t$ is the set of observed edges at time $t$, $\lambda_1$ and $\lambda_2$ are L2 regularization coefficients. The inclusion of L2 regularization term is critical to our solution in two folds. First, it makes the sub-problems in coordinate decent based optimization well-posed, so that we have closed form solutions for $\btheta_v$ and $\Beta_v$ at each round. Second, it helps to remove the scaling indeterminacy between the estimates of $\btheta_v$ and $\Beta_v$, and makes the $q$-linear
convergence rate of parameter estimation achievable \cite{LocalConvergenceALS, wang2016learning, wang2017factorization}.
The closed-form estimation of $\btheta_v$ and $\Beta_v$ with respect to Eq \eqref{eq:obj_func}
at round $t+1$ can be obtained by
 $\hat \btheta_{v,t+1} = \bA_{v,t+1}^{-1} \bb_{v,t+1}$ and $\hat\Beta_{v, t+1} = \bC_{v,t+1}^{-1} \bd_{v,t+1}$, in which,
\begin{align} \label{eq:model_update_stat1}
\bA_{v,t+1} &= \lambda_2 \bI + \sum_{i=1}^t \sum_{e \in \tilde \cE^{v, \text{giving}}_i } \hat \Beta_{r_e, t} \hat \Beta_{r_e, t}^\mt \\   \label{eq:model_update_stat2}
\bb_{v,t+1} &= \sum_{i=1}^t \sum_{e \in \tilde \cE^{v, \text{giving}}_i} \hat \Beta_{r_e, t} y_{e,i}\\ \label{eq:model_update_stat3}
\bC_{v,t+1} &= \lambda_1 \bI + \sum_{i =1}^t \sum_{e \in \tilde \cE^{v, \receiving}_i} \hat \btheta_{g_e,t} \hat \btheta^{\mt}_{g_e, t}\\ \label{eq:model_update_stat4}
\bd_{v,t+1} &= \sum_{i=1}^t \sum_{e \in \tilde \cE_i^{v,\receiving}} \hat \btheta_{g_e,t} y_{e,i}
\end{align}
$\tilde \cE_i^{v, \giving}$ is the set of observed edges at round $i$ where node $v$ is the giving node, and accordingly $\tilde \cE_i^{v,\receiving}$ is the set of observed edges where node $v$ is the receiving node. $\bI$ is a $d \times d$ identity matrix.  

The estimated influence factors $\{\hat\btheta_{v,t}\}_{v\in \cV}$ and susceptibility factors $\{\hat\Beta_{v,t}\}_{v\in \cV}$ give us an estimate of the edge activation probability by $\hat p_{e, t} = \hat \btheta^\mt_{g_e, t} \hat \Beta_{r_e, t}$ at round $t$. This represents the model's current best knowledge about influence propagation in graph $G$, and therefore serves for the exploitation purpose.
We also need a term to control exploration in online learning; and we appeal to the upper confidence bound (UCB) principle in this work \cite{Auer02}, as the confidence interval of our factor estimation is readily available. In our solution, the uncertainty of activation probability estimation during online update comes from the uncertainty on both latent factors, i.e., $\lVert \hat \btheta_{v,t} -\btheta_v^* \rVert$ and $\lVert \hat \Beta_{v,t} -\Beta_{v}^* \rVert$, where $\btheta_v^*$ and $\Beta_{v}^*$ are the ground-truth factors. Based on the closed form solution in our coordinate descend estimation, 
the estimation of activation probability $\hat p_{e,t}$ can be analytically bounded with a high probability as shown in the following lemma.

\begin{lemma}[Confidence bound of influence probability estimation] \label{lemma:activation_prob_CB}
For any node $v$, and directed edge $e$, $\delta \in (0,1), q \in (0,1)$, with a probability at least $1- \delta$, we have,

\begin{align*}
    \lVert \hat \btheta_{v,t} - \btheta_v^* \rVert_{\bA_{v,t}} &\leq \alpha_{v}^{\Beta}, 
   \lVert \hat \Beta_{v,t} - \Beta_v^*  \rVert_{\bC_{v,t}} \leq \alpha_{v}^{\btheta} \\
    |p^*_{e} -\hat p_{e, t}| &\leq B^{\btheta}_{g_e,t}  + B^{\Beta}_{r_e,t} + 2q^{2t}
\end{align*}
where $B^{\btheta}_{v,t} =  \alpha_v^{\btheta}  \lVert \hat\btheta_{v,t} \rVert_{\cC_{v,t}^{-1}}$, $B^{\Beta}_{v,t} =  \alpha_v^{\Beta} \lVert \hat \Beta_{v,t} \rVert_{\bA^{-1}_{v},t}$,  $\alpha_v^{\btheta} = \sqrt{ \log \frac{\text{det}(\bC_{v,T})}{\delta^2 \text{det}(\lambda_2 \bI) } } + \frac{\lambda_2(1-q) + 2q}{\sqrt{\lambda_2}(1-q)}$, and $\alpha_v^{\Beta} = \sqrt{ \log \frac{\text{det}(\bA_{v,T})}{\delta^2 \text{det}(\lambda_1 \bI) } } + \frac{\lambda_1(1-q) + 2q}{\sqrt{\lambda_1}(1-q)}$.
\end{lemma}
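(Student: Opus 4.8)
The plan is to establish the two weighted confidence ellipsoids for $\hat\btheta_{v,t}$ and $\hat\Beta_{v,t}$ first, and then derive the activation-probability inequality as a corollary, since $|p_e^* - \hat p_{e,t}|$ decomposes bilinearly into the two factor errors. Throughout I would model the feedback as $y_{e,i} = \btheta_{g_e}^{*\mt}\Beta_{r_e}^* + \eta_{e,i}$, where $\eta_{e,i} = y_{e,i} - p_{e}$ is a bounded (hence $1$-sub-Gaussian) martingale-difference sequence that is conditionally centered given the history up to round $i$ and the event that edge $e$ is observed. The two ellipsoid bounds are symmetric under exchanging the roles of $(\btheta,\bA,\lambda_2)$ and $(\Beta,\bC,\lambda_1)$, so I would prove the bound for $\hat\btheta_{v,t}$ in full and transcribe the other.

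For the influence-factor bound I would start from $\hat\btheta_{v,t}=\bA_{v,t}^{-1}\bb_{v,t}$ and write
\begin{equation*}
\hat\btheta_{v,t}-\btheta_v^* = \bA_{v,t}^{-1}\big(\bb_{v,t}-\bA_{v,t}\btheta_v^*\big),
\end{equation*}
so that $\lVert\hat\btheta_{v,t}-\btheta_v^*\rVert_{\bA_{v,t}} = \lVert \bb_{v,t}-\bA_{v,t}\btheta_v^*\rVert_{\bA_{v,t}^{-1}}$. Substituting the noise model into $\bb_{v,t}$ splits this residual into three pieces: a noise term $\sum_i\sum_e \hat\Beta_{r_e,t}\eta_{e,i}$, a regularization-bias term $-\lambda_2\btheta_v^*$, and a feature-estimation-error term $\sum_i\sum_e \hat\Beta_{r_e,t}(\Beta_{r_e}^*-\hat\Beta_{r_e,t})^{\mt}\btheta_v^*$ arising from regressing on estimated rather than true susceptibility factors. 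Using $\bA_{v,t}\succeq\lambda_2\bI$, the regularization term contributes $\lVert\lambda_2\btheta_v^*\rVert_{\bA_{v,t}^{-1}}\le\sqrt{\lambda_2}\lVert\btheta_v^*\rVert$, the $\sqrt{\lambda}$ summand inside $\alpha_v^{\Beta}$; the feature-error term I would bound by converting to the Euclidean norm (again via $\bA_{v,t}\succeq\lambda_2\bI$, costing $\lambda^{-1/2}$), bounding each gap $\lVert\Beta_{r_e}^*-\hat\Beta_{r_e,t}\rVert$ by the $q$-linear convergence rate of the alternating estimation, and summing the resulting geometric series, which produces the $\tfrac{2q}{\sqrt{\lambda}(1-q)}$ summand.

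The central step is the noise term, controlled by the self-normalized martingale concentration bound (in the style of Abbasi-Yadkori et al.), giving $\lVert\sum_i\sum_e\hat\Beta_{r_e,t}\eta_{e,i}\rVert_{\bA_{v,t}^{-1}}\le\sqrt{\log(\det(\bA_{v,T})/(\delta^2\det(\lambda\bI)))}$ with probability $1-\delta$. The main obstacle is that the regressors $\hat\Beta_{r_e,t}$ are \emph{not} predictable: they are computed from all observations through round $t$ and are therefore correlated with the very noise $\eta_{e,i}$ they multiply, so the self-normalized bound cannot be applied verbatim. I would resolve this as in the factorization-bandit analyses cited in the paper: pass to an oracle residual built from the deterministic true features $\Beta_{r_e}^*$, which is predictable and admits the self-normalized bound directly, then absorb the oracle-versus-realized discrepancy into the feature-error term already governed by the $q$-linear rate. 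A secondary subtlety is that the observed-edge sets $\tilde\cE_i$ are themselves random; this stays benign provided the filtration is chosen so that each edge's observation indicator is measurable with respect to the past, keeping $\eta_{e,i}$ conditionally centered.

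Finally, given both ellipsoids, I would obtain the probability bound from
\begin{equation*}
\hat p_{e,t}-p_e^* = (\hat\btheta_{g_e,t}-\btheta_{g_e}^*)^{\mt}\hat\Beta_{r_e,t} + \btheta_{g_e}^{*\mt}(\hat\Beta_{r_e,t}-\Beta_{r_e}^*).
\end{equation*}
Applying Cauchy--Schwarz in the $\bA_{g_e,t}$- and $\bC_{r_e,t}$-norms and invoking the two ellipsoid bounds turns the first term into $B^{\Beta}_{r_e,t}$ and the second into $B^{\btheta}_{g_e,t}$; rewriting the true factor as its estimate in the second term leaves a bilinear residual $(\hat\btheta_{g_e,t}-\btheta_{g_e}^*)^{\mt}(\hat\Beta_{r_e,t}-\Beta_{r_e}^*)$, which the $q$-linear convergence of both factor sequences bounds by $2q^{2t}$. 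A union bound over the two ellipsoid events keeps the total failure probability at $\delta$.
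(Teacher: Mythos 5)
Your proposal is correct and follows essentially the same route as the paper: the identical three-term bilinear decomposition of $p_e^* - \hat p_{e,t}$, with the two cross terms handled by Cauchy--Schwarz in the $\bA_{g_e,t}$- and $\bC_{r_e,t}$-weighted norms against the confidence ellipsoids, and the bilinear residual $(\hat\btheta_{g_e,t}-\btheta_{g_e}^*)^{\mt}(\hat\Beta_{r_e,t}-\Beta_{r_e}^*)$ killed by the $q$-linear convergence rate to give $2q^{2t}$. The only difference is that you additionally sketch the self-normalized-martingale proof of the two ellipsoid bounds (regularization bias, noise term, and estimated-regressor correction), which the paper omits by deferring to Lemma 1 of its cited factorization-bandit reference; your sketch is consistent with the stated constants $\alpha_v^{\btheta}$ and $\alpha_v^{\Beta}$.
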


This lemma provides a tight high probability upper bound of the activation probability estimation on each edge, which enables us to perform efficient UCB-based exploration for model update. 
Denote the resulting upper confidence bound as $\bar p'_{e,t}$, with $\bar p'_{e,t} = \hat\btheta_{g_e,t}^\mt \hat \Beta_{r_e,t} + \text{CB}_{e,t}$ and $\text{CB}_{e,t} = B^{\btheta}_{g_e,t}  + B^{\Beta}_{r_e,t} + 2q^{2t}$. As this upper bound might not be in a valid value as a probability, we use a projection operation to map it to the range of $[0, 1]$ and get $\bar p_{e,t} = \text{Proj}_{[0,1]} (\bar p'_{e,t})$.  $\text{Proj}_{[0,1]}(x)$ is a function projecting its real value input $x$ to the range of $[0,1]$. More specifically, if the input $x$ is negative, the output is $0$; if $x$ is larger than $1$, the output is $1$; otherwise $x$ will be returned. 
Note that $\bar p'_{e,t}$ is a high probability upper bound of $p_{e,t}$, i.e., $p_{e,t} \leq \bar p'_{e,t}$. After this projection operation we can still guarantee that with a high probability $p_{e,t} \leq \bar p_{e,t}$. This is true because: 1). When $\bar p'_{e,t} > 1$, $\bar p_{e,t} =1$, which naturally guarantees $p_{e,t} \leq \bar p_{e,t}$; 2). When $\bar p'_{e,t} < 0 $, $\bar p_{e,t} = 0$, we have $p_{e,t} \leq  \bar p'_{e,t} < \bar p_{e,t}$.  
As an upper confidence bound of activation probability $\bar p_{e,t}$, it well balances the need of exploration and the need of exploitation \cite{Auer02}. We thus directly feed $\bar \bP_t = (\bar p_{e_1,t}, \bar p_{e_2,t}, ...., \bar p_{e_{|\cE|},t})$ at round $t$ into the ORACLE to solve one round of influence maximization, from which edge-level feedback on the observed edges will be obtained. Based on the observed edges, we get a set of observed nodes, and node-level influence factor $\btheta_v$ and susceptibility factor $\Beta_v$ and their associated statistics can be updated based on their closed form solutions. We summarize the details of our factorization based online influence maximization algorithm in  Algorithm \ref{alg:IMFB}, and name it as Influence Maximization with Factorization-Bandits, or IMFB in short. 

\begin{algorithm}
\caption{Influence Maximization with Factorization-Bandits (IMFB)}\label{alg:IMFB}
\begin{algorithmic}[1]
\State \textbf{Inputs:} Graph $G$ and oracle $\textbf{ORACLE}$, latent factor dimension $d$, regularization parameter $\lambda_1$ and $\lambda_2$, $q \in (0,1)$
\State \textbf{Initialization:}  $\bA_{v,1} = \lambda_1 \bI, \bC_{v,1} = \lambda_2 \bI$, sample $\hat \btheta_{v,1} \in \mathbf{R}^{d}$, $\hat \Beta_{v,1} \in \mathbf{R}^{d}$ for $v \in \cV$
\For{$t = 1, 2, \ldots,T$}
\State Compute $\bar p_{e,t} = \text{Proj}_{[0,1]}\big( \hat \btheta_{g_e,t}^\mt \hat \Beta_{r_e,t}  + \text{CB}_{e,t} \big)$ with $\text{CB}_{e,t} = \alpha^{\Beta}_{g_e} \lVert \hat \Beta_{g_e,t} \rVert_{\bA^{-1}_{g_e,t}} +  \alpha^{\btheta}_{r_e} \lVert \hat \btheta_{r_e,t} \rVert_{\bC^{-1}_{r_e,t}} + 2q^t$ for $e \in \cE$ 
\State Choose $\cS_t = \text{ORACLE}(G,K,\bar \bP_t)$ in which $\bar \bP_{t} = \{ \bar p_{e,t} \}_{e \in \cE}$
\State Observe the edge-level feedback $\{ y_{e,t} \}_{e \in \tilde \cE_t}$ in which $\tilde \cE_t$ is the observed edges.
\State Update the node-level influence statistics and susceptibility statistics on the observed nodes according to Eq \eqref{eq:model_update_stat1},\eqref{eq:model_update_stat2},\eqref{eq:model_update_stat3},\eqref{eq:model_update_stat4} 
\EndFor
\end{algorithmic}
\end{algorithm}

\subsection{Comparison with Existing Solutions}
\label{sec_comparison}
\subsubsection{Node-level v.s., edge-level influence model estimation}
In our proposed solution, we directly estimate the node-level influence and susceptibility factors by performing online factorization based on sequential edge-level observations. In this subsection, we illustrate the advantages of such a node-level parameter estimation by comparing with solutions that directly estimate edge-level parameters. One classic online influence maximization solution with edge-level parameter estimation is the CUCB algorithm \cite{wang2017improving}.
Comparing to CUCB, there are two important advantages in IMFB. First, reduced model complexity. In CUCB the edge activation probabilities are estimated independently across the edges, the complexity of which is $O(|\cE|)$ (and it can become as large as $O(|\cV|^2)$ in the complete graph case). While the model complexity in IMFB is $O(d|\cV|)$, which is considerably smaller, as the dimension of latent space should be much smaller than the number of nodes. 
Second, reduced sample complexity. Figure \ref{fig:illu} shows an example of influence model update in CUCB and IMFB after one round of influence maximization. Suppose node $v_0$ is activated at this round, according to the definition of observed edges, $e_{v_0,v_1}$ and $e_{v_0,v_2}$ (edges with solid lines) are observed edges, and $e_{v_3,v_1}$, $e_{v_4,v_1}$, and $e_{v_5,v_1}$ (edges shown with dash lines) are unobserved edges. 
In CUCB, shown in Figure \ref{fig:illu} (a), its activation probability statistics will only be updated for those observed edges and no information is learned for those unobserved edges. In IMFB, by utilizing the fact that the activation on a particular edge is a reflection of the giving node's influence and the receiving node's susceptibility, observations from the observed edges can be propagated to the unobserved edges, as shown in Figure \ref{fig:illu} (b). Specifically, because $\hat\Beta_{v_1}$ will be updated by this round's observations, the estimation of $\hat p_{e_{v_3,v_1}}$, $\hat p_{e_{v_4,v_1}}$ and $\hat p_{e_{v_5,v_1}}$ can all be improved for the next round of interaction. This directly accelerates the learning of influence structure. This advantage becomes more obvious and significant when the network is denser. Our theoretical analysis in the following section proves that when the network is fully connected, we can reduce the regret of IMFB comparing to CUCB by a factor of $\sqrt{|\cV|}$. In addition, as we discussed earlier, independent edge-level estimations cannot guarantee to capture the important properties of influence propagation in real networks, e.g., assortativity. IMFB explicitly encode such structure into model learning, which can better realize the structural dependency in influence propagation.

\begin{figure}[t]
\centering
\setlength\tabcolsep{2pt}
\begin{tabular}{ >{\centering\arraybackslash}m{4.2cm} >{\centering\arraybackslash}m{4.2cm}}
\hspace*{-3mm}
\includegraphics[width=3.3cm]{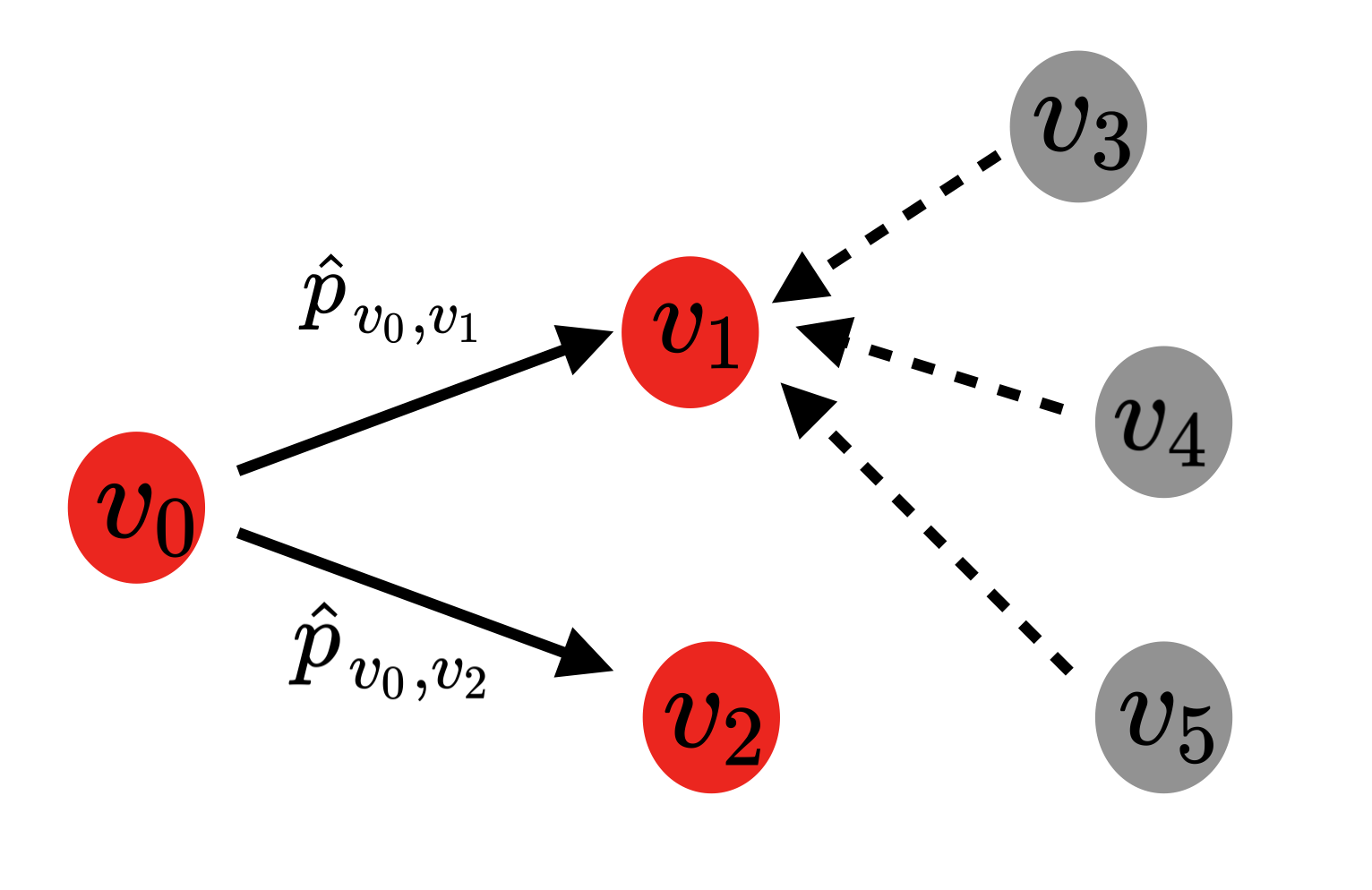} &
\includegraphics[width=3.5cm]{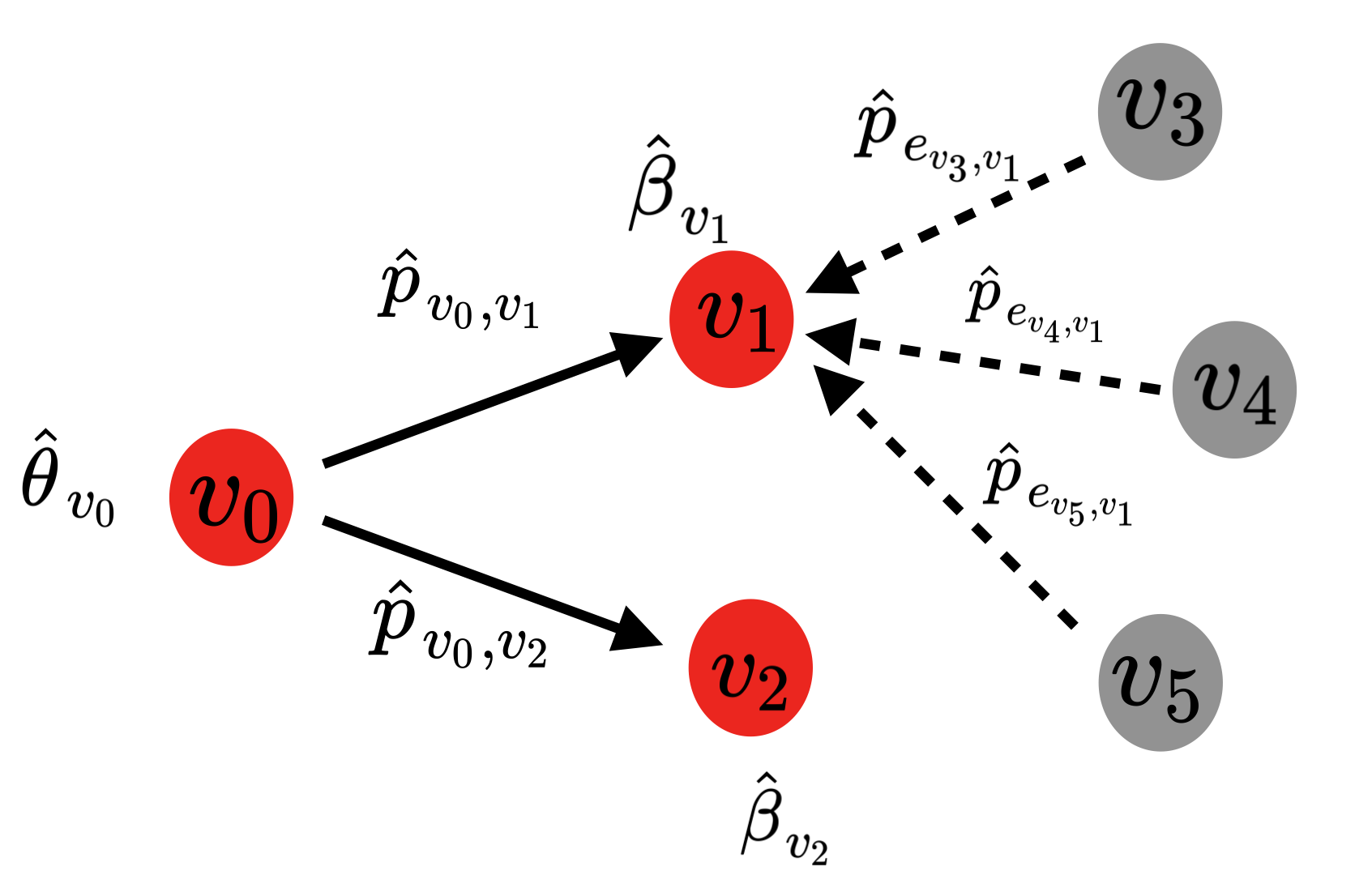} \\
(a) CUCB update &  (b) IMFB update  \\
\end{tabular}
\vspace{-2mm}
\caption{Comparison between CUCB and IMFB in influence model update.} \label{fig:illu}
\vspace{-4mm}
\end{figure}


\subsubsection{Activation probability v.s., reachability}
A recent work, DILinUCB \cite{vaswani2017model}, also obtains an $O(d|\cV|)$ model complexity by performing node-level parameter estimation. But DILinUCB is conceptually very different from our IMFB. Instead of estimating the activation probability on edges from the interactions, DILinUCB directly estimates the reachability between all pairs of nodes (no matter if they are directly connected or not) from edge-level  feedback via a linear contextual bandit model. There are two important limitations of it.
Firstly, as mentioned in their paper, the proposed reachability based surrogate objective is only an heuristic approximation to the original online influence maximization problem. There is no theoretical guarantee about its approximation quality. Secondly, although its claimed diffusion model independent, their linear reachability assumption may violate the influence diffusing rules in some diffusion models, for example the independent cascading diffusion model. 

\section{Regret Analysis}
Let $\cS^{opt}$ be the optimal solution on an input graph, and $\cS^* = \text{ORACLE}(G, K, \bP^*)$ be the solution from an oracle. Here we use $\bP^*$ to denote the ground-truth activation probability in the network. For any $\alpha, \gamma \in [0,1]$, we say that $\text{ORACLE}$ is an $(\alpha, \gamma)$-approximation oracle for a given $(G,K)$ if for any activation probability vector $\bP$, $f_{\bP}(\cS^*) \geq \gamma f_{\bP}(\cS^{opt})$ with probability at least $\alpha$. 
In this section, we simplify the notation of reward function $f_{D, \bP}( \cS )$ to $f_{\bP}( \cS )$, since we are consistently using independent cascade model. 

To analyze the performance of online influence maximization algorithms with an approximation/randomized oracle, we look at an approximate performance metric -- the scaled cumulative regret: $R^{\alpha \gamma}(T) = \sum_{i=1}^T \bbE[R_i^{ \alpha \gamma}]$, in which $ \alpha \gamma >0$ is the scale and $R_t^{\alpha \gamma} = f_{\bP^*}(\cS^{opt}) - \frac{1}{\alpha \gamma} f_{\bP^*}(\cS_t)$. Based on this regret definition, we have Theorem \ref{theorem:main_regretbound} specifying the regret upper bound of IMFB, which utilized two important proprieties of the reward function $f_{\bP}(\cS)$ in influence maximization problem: monotonicity and 1-Norm bound smoothness \cite{wang2017improving}.

\begin{theorem}
\label{theorem:main_regretbound}
Assuming that the ORACLE is an $(\alpha,\gamma)$-approximation algorithm, we have the following upper regret bound in IMFB,
\begin{align}
    R^{\alpha\gamma}(T)  \leq O\big( \frac{d B}{\alpha \gamma} & ( \sqrt{T |\cV^{\giving}| |\cV| D^{\textout} }  \log (TD^{\textout})\\ \nonumber
    & + \sqrt{T |\cV^{\receiving}| |\cV| D^{\textin} } \log (TD^{\textin})) \big)
\end{align}
in which $B$ is the bounded smoothness constant and it can be proved that $B \leq |\cV|$ in influence maximization problem \cite{wang2017improving}. $D^{\textout}$ and $D^{\textin}$ are the maximum out-degree and in-degree of graph $G$. $|\cV^{\giving}|$ and $|\cV^{\receiving}|$ are the maximum number of observed giving nodes and receiving nodes at any time point during the interactions.
\end{theorem}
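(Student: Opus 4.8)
The plan is to follow the combinatorial multi-armed bandit (CMAB) template of Wang and Chen \cite{wang2017improving}, which converts the scaled regret into an accumulated sum of per-edge confidence widths, and then to bound that accumulated sum by a determinant (elliptical-potential) argument adapted to the node-level factorization. First I would condition on the high-probability event of Lemma~\ref{lemma:activation_prob_CB}, on which $p^*_e \le \bar p_{e,t}$ for every edge $e$ and round $t$; by the monotonicity of $f_{\bP}$ this gives $f_{\bar\bP_t}(\cS^{opt}) \ge f_{\bP^*}(\cS^{opt})$, and the $(\alpha,\gamma)$-approximation guarantee applied to $\bar\bP_t$ yields $f_{\bar\bP_t}(\cS_t) \ge \gamma f_{\bar\bP_t}(\cS^{opt}) \ge \gamma f_{\bP^*}(\cS^{opt})$ with probability at least $\alpha$. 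The role of the $\frac{1}{\alpha\gamma}$ scaling in $R_t^{\alpha\gamma}$ is precisely to absorb this success probability and approximation ratio in expectation, so that the oracle's randomness contributes no extra term. The low-probability failure event of Lemma~\ref{lemma:activation_prob_CB} is charged at the trivial per-round regret $O(|\cV|)$ and made lower order by choosing $\delta$ of order $1/T$.

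The next step is the bounded-smoothness inequality. Using the 1-norm bounded smoothness of $f_{\bP}$ with constant $B$ together with $p^*_e \le \bar p_{e,t}$, I would bound $f_{\bar\bP_t}(\cS_t) - f_{\bP^*}(\cS_t) \le B\sum_{e\in\tilde\cE_t}(\bar p_{e,t}-p^*_e) \le B\sum_{e\in\tilde\cE_t}\text{CB}_{e,t}$, where the sum runs over the edges actually observed in round $t$. Combining with the previous paragraph gives the per-round bound $\bbE[R_t^{\alpha\gamma}] \le \frac{B}{\alpha\gamma}\bbE[\sum_{e\in\tilde\cE_t}\text{CB}_{e,t}]$, hence $R^{\alpha\gamma}(T) \le \frac{B}{\alpha\gamma}\bbE[\sum_{t=1}^T\sum_{e\in\tilde\cE_t}\text{CB}_{e,t}]$. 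I would then split $\text{CB}_{e,t} = B^{\btheta}_{g_e,t} + B^{\Beta}_{r_e,t} + 2q^{2t}$ into three pieces: the geometric tail $\sum_t\sum_{e\in\tilde\cE_t} 2q^{2t} \le 2|\cE|\sum_t q^{2t} = O(1)$ is a constant, and the two remaining pieces are symmetric, so it suffices to bound the influence piece $\sum_t\sum_{e\in\tilde\cE_t} B^{\btheta}_{g_e,t}$.

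The heart of the proof is bounding this accumulated influence-confidence sum, and the main device is to reorganize the double sum over (round, edge) pairs into a sum over nodes. Grouping the edge terms by the node whose latent factor they probe, each such node $v$ behaves like an independent linear bandit: its accumulated design matrix (one of $\bA_{v,t}$ or $\bC_{v,t}$ from Eqs.~\eqref{eq:model_update_stat1}--\eqref{eq:model_update_stat3}) absorbs one rank-one update per incident observed edge, and its number of observations $N_v$ is the total count of such edges across all rounds. For a fixed $v$ I would (i) bound the confidence radius by a log-determinant estimate, $\alpha_v = O(\sqrt{d\log(TD^{\textout})})$, using that the design matrix receives at most $N_v \le TD^{\textout}$ rank-one updates, and (ii) combine Cauchy-Schwarz over the $N_v$ observations with the elliptical-potential lemma $\sum\lVert\cdot\rVert^2 = O(d\log(TD^{\textout}))$ to obtain a per-node bound of order $\sqrt{N_v}\cdot\sqrt{d\log(TD^{\textout})}$. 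A second Cauchy-Schwarz across the at most $|\cV|$ nodes, using $\sum_v N_v = \sum_t|\tilde\cE_t| \le T|\cV^{\giving}|D^{\textout}$, then yields $\sum_v\sqrt{N_v} \le \sqrt{|\cV|\,T|\cV^{\giving}|D^{\textout}}$. Multiplying the factors gives $O\big(d\log(TD^{\textout})\sqrt{T|\cV^{\giving}||\cV|D^{\textout}}\big)$ for the influence piece; the symmetric argument for the susceptibility piece produces the $D^{\textin},|\cV^{\receiving}|$ term, and adding the two and restoring $B/(\alpha\gamma)$ completes the bound.

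The step I expect to be the main obstacle is the elliptical-potential/determinant argument in the presence of two complications absent from vanilla linear CMAB: the design matrices receive a whole batch of observed edges per round, so several edges sharing a node are measured against the same inverse design matrix before it is updated; and the ``features'' feeding into the Gram matrices are themselves the estimated factors $\hat\Beta_{r_e,t}$ and $\hat\btheta_{g_e,t}$, which drift across rounds, with the scaling/rotation indeterminacy between $\btheta$ and $\Beta$ controlled only up to the $q$-linear convergence term $2q^{2t}$ supplied by Lemma~\ref{lemma:activation_prob_CB}. Making the per-node potential sum rigorous therefore requires either processing each per-round batch as sequential rank-one updates or paying an explicit degree correction, and verifying that the drift of the estimated factors is already absorbed into $\alpha_v$ and the geometric term so that the standard self-normalized bound still applies. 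The careful tracking of the degree factors $D^{\textout},D^{\textin}$ and the node counts $|\cV^{\giving}|,|\cV^{\receiving}|,|\cV|$ through the two Cauchy-Schwarz steps is what ultimately yields the claimed $\sqrt{|\cV|}$-type saving over the per-edge CUCB baseline.
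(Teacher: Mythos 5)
Your proposal follows essentially the same route as the paper's own proof: the identical event-conditioned regret decomposition (monotonicity of $f_{\bP}$ under the high-probability bound $p^*_e \le \bar p_{e,t}$, the oracle's $(\alpha,\gamma)$-approximation, failure event charged at trivial regret), then $1$-norm bounded smoothness to reduce the regret to $\frac{B}{\alpha\gamma}\sum_{t}\sum_{e\in\tilde\cE_t}\text{CB}_{e,t}$, and finally a regrouping of the per-edge confidence widths by node combined with Cauchy--Schwarz and an elliptical-potential (log-determinant) bound, which is exactly the content of the paper's Lemma~\ref{lemma:sum_CB} and Lemma~\ref{lemma:CB_square_sum_bound}. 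The only differences are organizational (you apply Cauchy--Schwarz per node and then across nodes, whereas the paper applies it once over the (round, node) pairs), and you additionally flag the batch-update and drifting-estimated-feature subtleties that the paper delegates to the cited lemma of \cite{wen2017online} without further detail.
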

\textbf{Proof sketch of Theorem \ref{theorem:main_regretbound}}: According to the regret definition, the expected regret at time $t$ can be bounded by,
\begin{align*}
    \bbE[R^{\alpha \gamma}_t]  \leq \frac{1}{\alpha \gamma} \bbE[f_{\bP^*}(\cS^*) - f_{\bP^*}(\cS_t)]
     & \leq  \bbE[f_{\bar \bP_t}(\cS_t) - f_{\bP^*}(\cS_t)] \\ \nonumber
    & \leq B \sum_{e \in  \tilde \cE_t} |\bar p_{e,t} -  p_{e}^{*}|
\end{align*}
in which the second inequality is based on the monotonicity of the reward function and the ORACLE's seed node selection. The third inequality is based on the 1-Norm bounded smoothness of the reward function and $B$ is the bounded smoothness constant \cite{wang2017improving}. Then combining the activation probability upper bound provided in Lemma \ref{lemma:activation_prob_CB}, the conclusion in Theorem \ref{theorem:main_regretbound} can be obtained. Due to space limit, detailed proof of this theorem and related technical lemmas will be provided in the supplement of this paper. 

\textbf{Analysis of Theorem \ref{theorem:main_regretbound}}: Since $|\cV^{\receiving}|$, $|\cV^{\giving}|$, $D^{\textout}$, and $D^{\textin}$ can all be simply upper bounded by $|\cV|$, the worst case upper regret bound of IMFB is $O(d|\cV|^{\frac{5}{2}} \sqrt{T})$, which is still better than the upper regret bound most online influence maximization algorithms: in a complete graph, their upper regret bound for IMLinUCB \cite{wen2017online} and CUCB \cite{wang2017improving} are $O(d|\cV|^{3} \sqrt{T})$ and $O(|\cV|^{3} \sqrt{T})$ respectively. 


\section{Experiments}

\begin{figure*}[ht]
\centering
\vspace{-2mm}
\setlength\tabcolsep{4pt}
\begin{tabular}{ >{\centering\arraybackslash}m{6.0cm} >{\centering\arraybackslash}m{6.0cm} >{\centering\arraybackslash}m{6.0cm}}
\hspace*{-3mm}
\includegraphics[width=5.6cm]{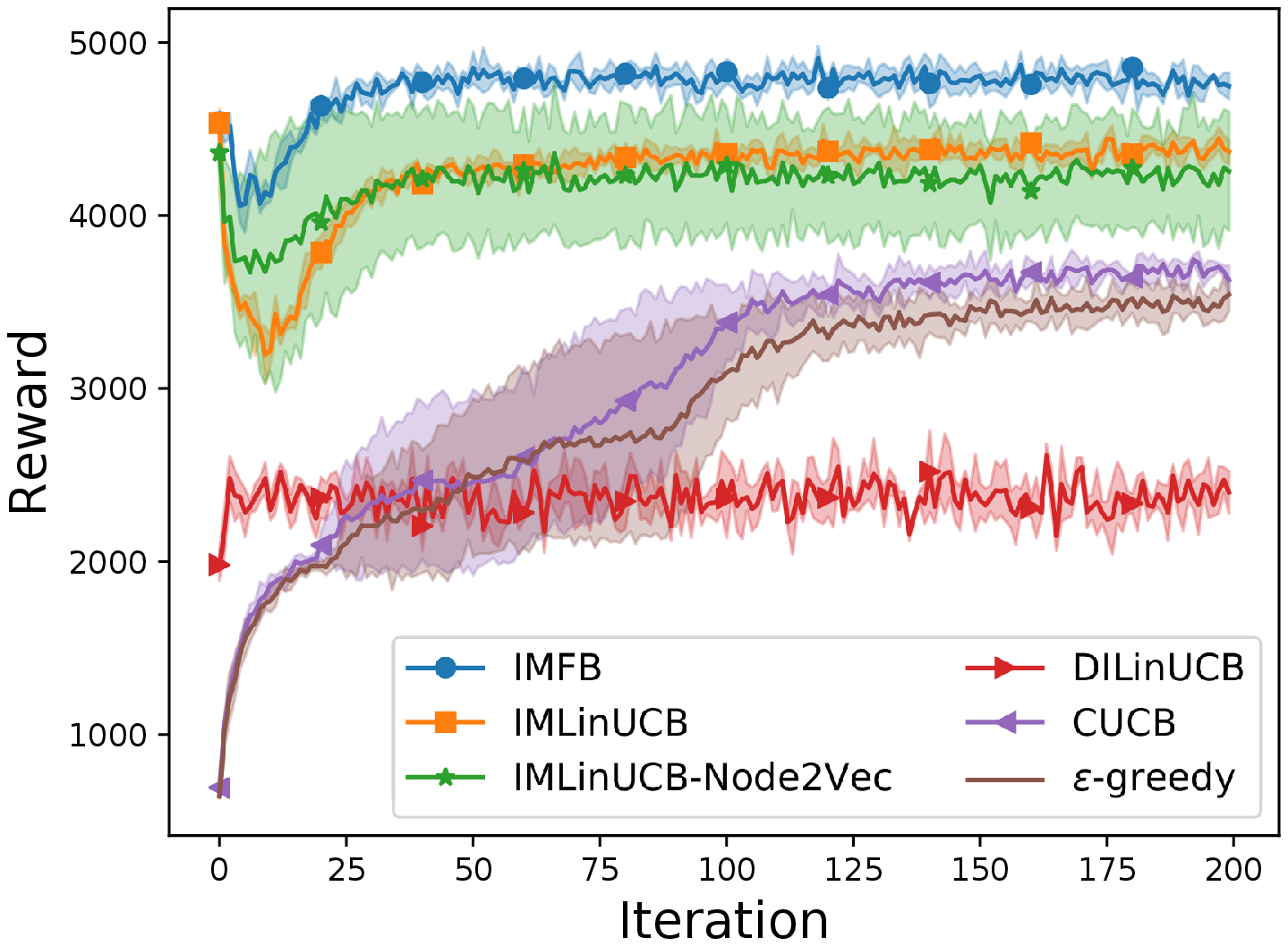} &
\includegraphics[width=5.6cm]{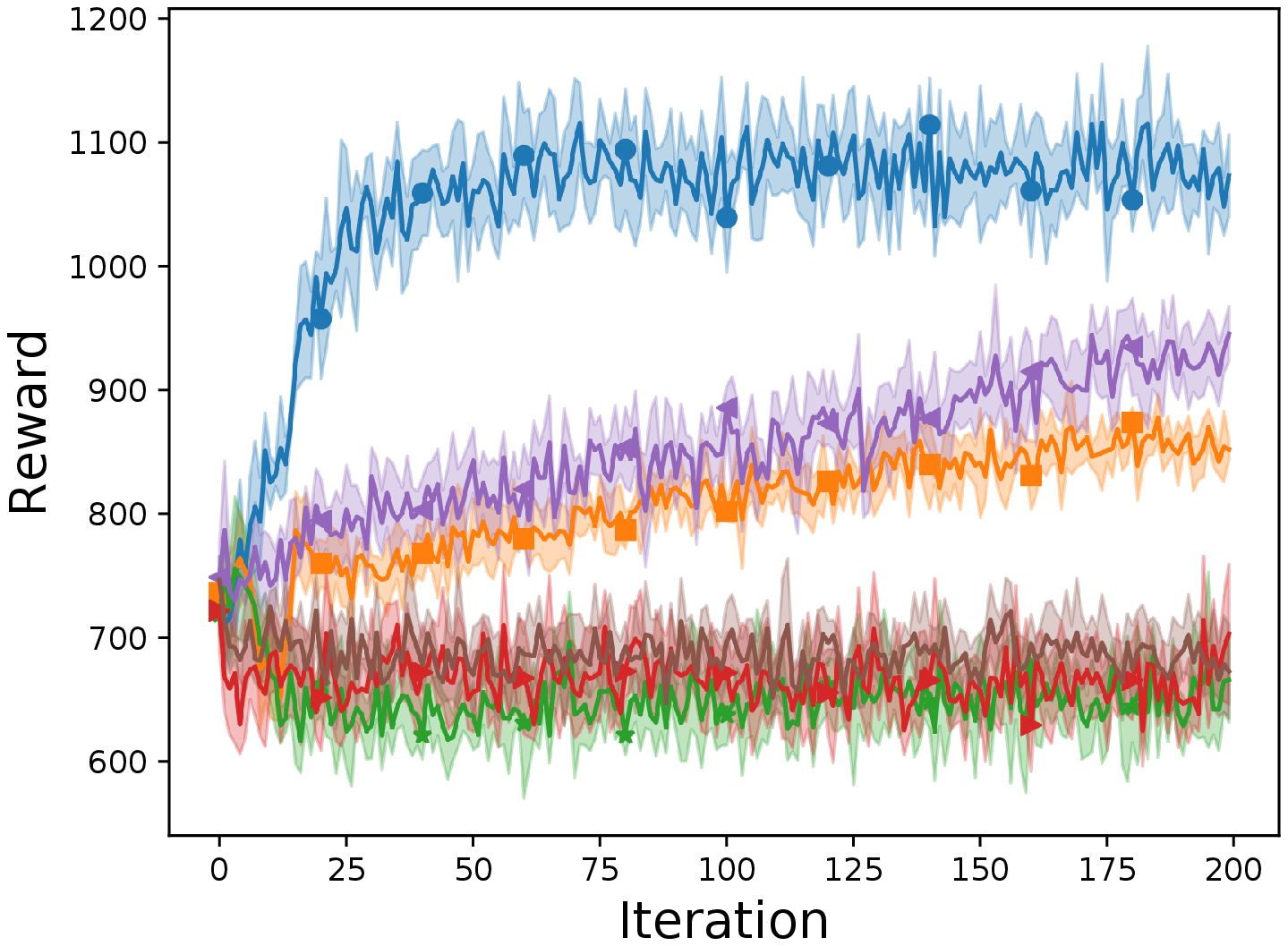} &
\includegraphics[width=5.6cm]{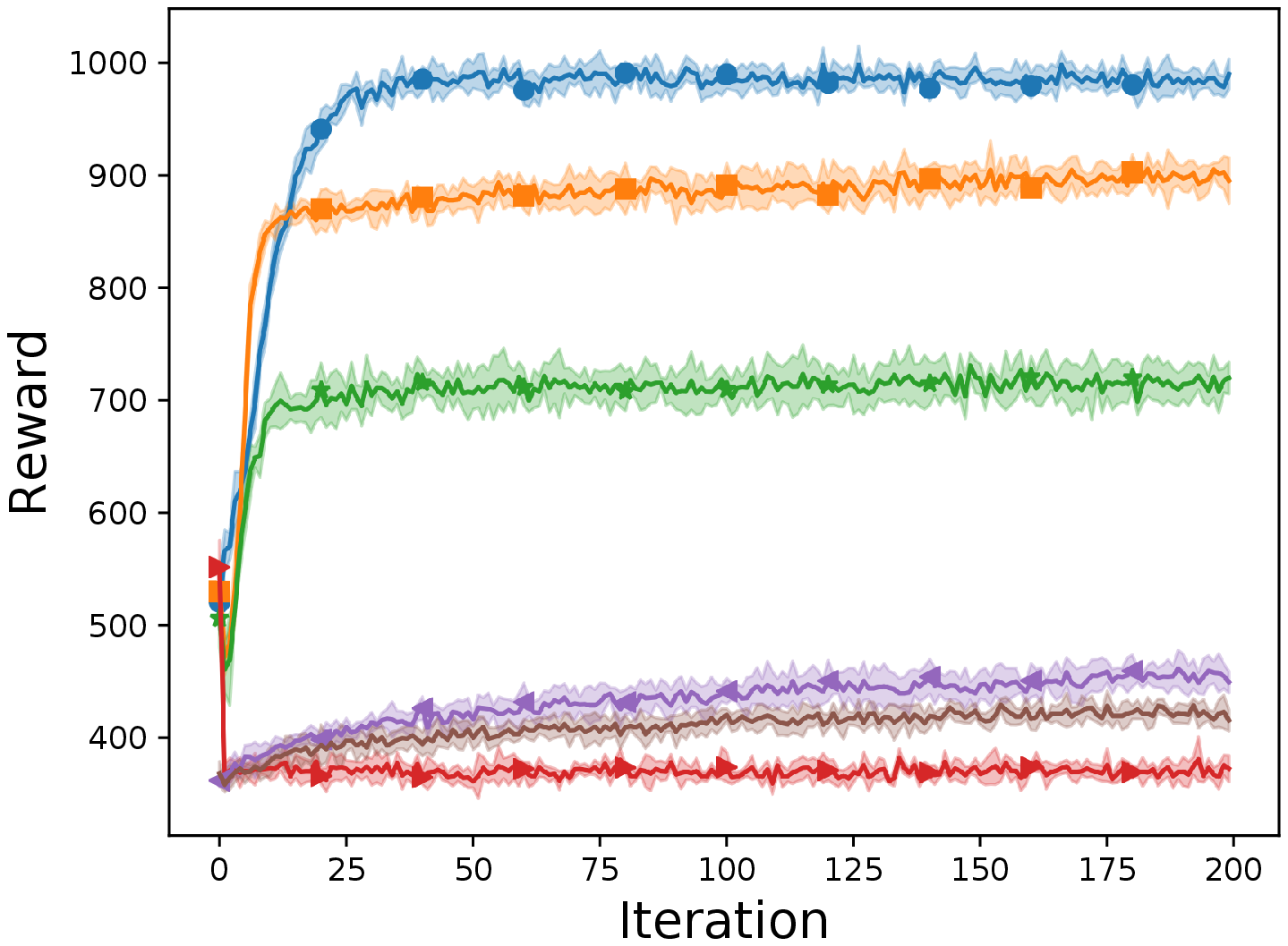} \\
(a) NetHEPT  &  (b) Flickr  & (c) Flickr with controlled nodes \\
\end{tabular}
\vspace{-2mm}
\caption{Real-time average reward from online influence maximization.}  \label{fig:avg}
\vspace{-4mm}
\end{figure*}

In this section, we performed extensive empirical evaluations on two real-world networks and compare the proposed solution with a set of state-of-the-art  online influence maximization baselines. 

\subsection{Datasets}
We evaluate our proposed IMFB algorithm on two real-world datasets: NetHEPT\footnote{\url{https://snap.stanford.edu/data/cit-HepTh.html}} (with 27,770 nodes, and 352,807 )  and Flickr\footnote{\url{https://snap.stanford.edu/data/web-flickr.html}} (with 12,812 nodes and 137,986 edges). 
On the NetHEPT dataset, the ground-truth influence factors and susceptibility factors on the nodes are sampled from a uniform distribution $U(0,0.1)$ and then normalized by their L2 norms. The activation probabilities on the edges are generated according to Eq \eqref{eq:activation_assumption}. The average edge activation probability on this resulting dataset is 0.053.  The Flickr dataset has a very skewed degree distribution, which makes it a very easy task for all the online influence maximization algorithms if the influence factors and susceptibility factors of all the nodes are sampled from the same distribution. In order to make the learning task on this dataset more challenging, we need to make the `soft degree' distribution on this network more balanced. Here we define a node's `soft degree' as the summation of its activation probability on all its connected out-going edges. In order to achieve this, we sample influence factors and susceptibility factors from 10 uniform distributions with different expectations, including $U(0,0.1), U(0.1, 0.2), ..., U(0.9, 1.0)$, and then normalize them using  their L2 norms respectively. We group the nodes into 10 groups according to their `hard degree' (here a node's original degree is referred as `hard degree'). We assign the groups of nodes with higher hard degree to the distribution that has a lower expectation, such that we have a much more balanced soft degree distribution on this network. This makes the learning of activation probabilities more challenging. The average edge activation probability on this resulting dataset is 0.065.

In all following experiments, the dimensionality for the latent influence and susceptibility factor is set to 20, and seed node size $K$ is set to 300 by default.

\subsection{Baseline Algorithms}
We empirically evaluate our proposed IMFB algorithm against the following state-of-the-art online IM algorithms.

\noindent $\bullet$ \textbf{\textit{CUCB}}  \cite{chen2013combinatorial,wang2017improving}: it estimates the activation probabilities on the edges independently and uses upper confidence bound of the estimation to balance exploitation and exploration. 

\noindent $\bullet$ \textbf{\textit{$\epsilon$-greedy}}: it estimates the activation probabilities on the edges independently, and uses $\epsilon$-greedy \cite{sutton1998introduction} to balance exploitation and exploration. 


\noindent $\bullet$ \textbf{\textit{IMLinUCB}} \cite{wen2017online}: it learns an edge-level bandit model and estimates the activation probabilities with given edge-level features. In our experiment, we test two variants of IMLinUCB depending on how the edge-level features are constructed: (1). 
We directly use the ground-truth influence and susceptibility factors $\btheta^*$ and $\Beta^*$ to generate a set of high-quality features for it. For each edge $e$, we take outer product on $\btheta_{g_e}^*$ and $\Beta_{r_e}^*$ and reshape it to a $d^2$-dimensional vector. We need to emphasize that since $\btheta_{g_e}^*$ and $\Beta_{r_e}^*$ are directly used to generate the ground-truth activation probability, this feature setting can lead to the best possible version of IMLinUCB, i.e., it only needs to recognize the diagonal terms in the outer product. (2). We follow the setting proposed in \cite{wen2017online}, where we first use node2vec \cite{grover2016node2vec} to create node embedding vectors and use an element-wise product between two nodes of an edge to get the edge features. This variant of IMLinUCB is referred to as IMLinUCB-Node2Vec in the following discussion. 

\noindent $\bullet$ \textbf{\textit{DILinUCB}} \cite{vaswani2017model}: is a model-independent contextual bandit algorithm for online influence maximization. DILinUCB requires features on the receiving nodes as input to estimate the reachibility between node pairs. We provide ground-truth susceptibility factor $\Beta_v^*$ for $v \in \cV$ as input of node features to DILinUCB. 

For IMFB and all baselines except DILinUCB, the DegreeDiscountIC algorithm proposed in ~\cite{chen2009efficient} is used as the influence maximization oracle. For DILinUCB, since the original influence maximization objective function is replaced with a heuristic surrogate function, the greedy oracle suggested in their solution \cite{vaswani2017model} is used here. In all our experiments, we run all algorithms over 200 rounds and report the averaged results from 5 independent runs.

\subsection{Comparison on Influence Maximization}
We first compare different algorithms' performance in maximizing influence during online learning. 
For better visibility, all subfigures in Figure \ref{fig:avg} share the same set of legend. 

In Figure \ref{fig:avg} (a) \& (b), we report the real-time average reward and variance (showed by the shaded area) of the collected reward from different algorithms on NetHEPT and Flickr dataset respectively. From this result, we first observe that IMFB shows significant improvement over all the baselines on both datasets. We also observe that on both datasets DILinUCB performs consistently the worst, which is expected because of the heuristic reachability assumption. 

On the NetHEPT dataset shown in Figure \ref{fig:avg} (a), we can observe that the learning process of CUCB and $\epsilon$-greedy are much slower than the other solutions. This observation verified our claim about the necessity of explicitly modeling the influence structure: when the edge activation probabilities are independently modeled (e.g., in CUCB and $\epsilon$-greedy), the learner can only update and improve its estimation quality on the observed edges. This makes the learning very slow, especially on this relative denser dataset. Although uncertainty of the estimation can help avoid local optimal, it may also take time to recognize the globally optimal seed nodes, especially when a lot of edges need exploration. 
While for IMFB and both variants of IMLinUCB, since they can propagate observations through node-level parameter estimation (in IMFB) or shared edge parameter estimation, the learning process is much faster. 

On the Flickr dataset shown in Figure \ref{fig:avg} (b), IMLinUCB performs even worse than CUCB. The potential reason for this bad performance is that IMLinUCB is over-parameterized, which slows down its learning convergence. For example, among all provided $d^2$ features, only $d$ of them are effective. The algorithm needs a lot more observations to recognize the relevance of those features. While at the same time, since the dataset is relatively sparser, i.e., the number of observed edges are relatively smaller in each round of interaction, which makes IMLinUCB's learning process even slower. 
In addition, we can also notice that IMLinUCB-Node2Vec is one of the worst performing algorithms. This indicates IMLinUCB is very sensitive to the quality of manually constructed edge-features. 

Based on our observations in Figure \ref{fig:avg} (a) \& (b), we designed a new setting on the Flickr dataset to better illustrate the importance and advantages of node-level parameter estimation in our proposed IMFB. On the Flickr dataset, we first categorize the nodes into two types: Type 1 includes nodes that have very high hard degrees; and Type 2 includes nodes that have relatively low hard degrees. Then we controlled the generation of influence and susceptibility factors to make the out-going edges from Type 1 nodes associated with extremely low activation probabilities. The purpose of this design is to make the average soft degree in Type 1 nodes small. And for Type 2 nodes, we control the activation probability generation to make their associated edges' activation probability extremely large. And we remove some of the edges (but not all of them) connecting these two types of nodes to further control the degree distribution. 

At the beginning of online influence maximization, when a learner's estimation about activation probabilities is not accurate, hard degree plays a dominating role in the ORACLE's decision. Hence, on this manipulated Flickr network, the ORACLE trends to start with Type 1 nodes (because their hard degree is very high) but the optimal seed nodes are more likely to concentrate on Type 2 nodes, where the average activation probability is higher. In this case, if there is no observation prorogation among edges, it would take more rounds of interactions for a learner to realize that Type 2 nodes are actually better. In IMFB, even though at the beginning stage it would also start from Type 1 nodes, because of its node-level parameter estimation, observations from observed edges can be effectively propagated to those unobserved edges. If those unobserved edges are connected to Type 2 nodes (because we did not remove all the edges connecting Type 1 and Type 2 nodes), estimation about the Type 2 nodes and their connected edges can thus be improved. This observation propagation to unobserved edges helps IMFB learn much faster than the other baselines.  This benefit is also illustrated in Figure \ref{fig:illu} of Section \ref{sec_comparison}. Empirical evaluation of average reward under this setting can be found in Figure \ref{fig:avg} (c), which verified our claimed benefit on IMFB: On this dataset, IMFB obtained much more significant improvement: it obtained $230 \%$ improvement over CUCB and $\epsilon$-greedy. 

\begin{table*}[ht]  
    \centering
    \vspace{-1mm}
    \caption{Robustness to dimensionality misspecification.} 
\label{tab:robustness_dimension}
    \vspace{-3mm}
\begin{tabular*}{\textwidth}{@{\extracolsep{\fill} } c|rrrr|rrrrr} %
\toprule
  &IMFB-1 &IMFB-5 & IMFB-20  &IMFB-60  & $\epsilon$-greedy & CUCB & IMLinUCB-Node2Vec & IMLinUCB & DILinUCB \\
\midrule 
 NetHEPT & \textbf{4657.80} & \textbf{4711.30} & \textbf{4731.90} & \textbf{4715.80} & 2882.10 & 3022.30 & 4262.40 & 4392.50 & 2368.80
\\
Flickr & \textbf{979.80} & \textbf{1044.88} & \textbf{1054.56} & \textbf{1044.17} & 706.14 & 880.37& 744.52 & 800.86 & 715.89
\\
\bottomrule
\end{tabular*}
\vspace{-2mm}   
\end{table*}

\begin{table*}[ht] 
    \centering
    \vspace{-1mm}
    \caption{Reward comparison under different environment settings on Flickr dataset.} 
    \label{tab:robust}
    \vspace{-2mm}
\begin{tabular*}{\textwidth}{@{\extracolsep{\fill} } l|rrrrrr}
\toprule
&IMFB & $\epsilon$-greedy & CUCB & IMLinUCB-Node2Vec & IMLinUCB & DILinUCB \\
\hline
$K=50 $& \textbf{203.92}& 161.80& 194.71& 130.47& 179.24
 &131.31\\
$K=100 $& \textbf{405.50}& 295.09& 352.51& 243.68 & 332.28 & 257.73
\\
$K=300 $& \textbf{1054.56}& 706.14 &880.37& 744.52 &800.86 & 715.89
\\
$K=1,000 $& \textbf{2737.86}& 2224.10& 2508.25& 2013.49& 2157.98 & 2198.64
\\
 $\eta=0 $& \textbf{1054.56}& 706.14 &880.37& 744.52 &800.86 & 715.89
\\
 $\eta \sim U(-0.05, 0.05) $& \textbf{1455.83} &1120.56 &1241.26 &1205.89 &1267.67 & 1131.39
\\
$\eta \sim U(-0.1, 0.1)  $&\textbf{1956.80} &1641.30 &1740.00 &1856.83 &1921.10 & 1711.80
\\
$\eta \sim U(-0.3, 0.3)  $& \textbf{2766.07} &2587.10 &2690.30 &2659.34 &2736.78 & 2590.18
\\
\hline
$c=0.5$& \textbf{432.45}&359.19 &367.75 &383.46 &412.45 &350.28
\\
$c=1.0$&\textbf{1054.56} &706.14 &880.37 &744.52 &800.86 &715.89
\\
$c=1.5$&\textbf{2727.07} &1863.94 & 2407.70 & 2035.44 & 2479.88 & 2278.18
\\
$c=2.0$&\textbf{3884.68} &3517.8 &3776.86 &3598.31 &3658.57 & 3526.53
\\
\bottomrule
\end{tabular*}
\vspace{-2mm}
\end{table*}

\subsection{Influence Model Estimation Quality}
In order to better understand the algorithms' effectiveness in activation probability learning, we compared the estimation quality of IMFB with related baselines on both datasets in Figure \ref{fig:loss} with solid lines (using the right-side of y-axis). Specifically, we calculated the absolute difference between the estimated activation probability $\hat p_{e,t}$ and ground-truth probability $p_e^*$ over \emph{observed edges}. We omit $\epsilon-$greedy and DILinUCB here since $\epsilon-$greedy's performance is very similar to CUCB, and DILinUCB does not directly estimate the activation probabilities. 

\begin{figure}[th!]
\centering
\vspace{-1mm}
\setlength\tabcolsep{4pt}
\begin{tabular}{ >{\centering\arraybackslash}m{4.2cm} >{\centering\arraybackslash}m{4.2cm}}
\hspace*{-2mm}
\includegraphics[width=4.4cm]{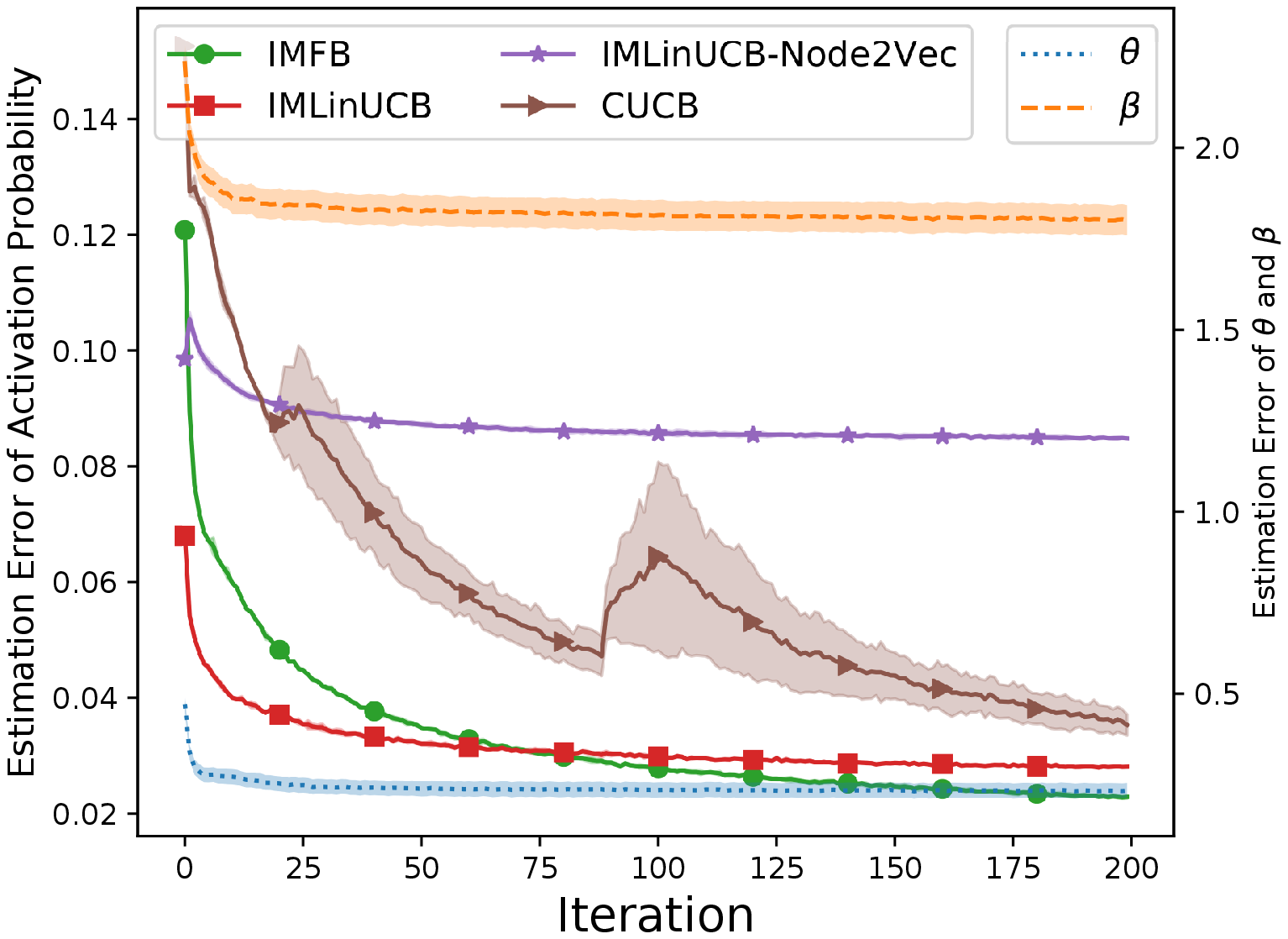} &
\includegraphics[width=4.4cm]{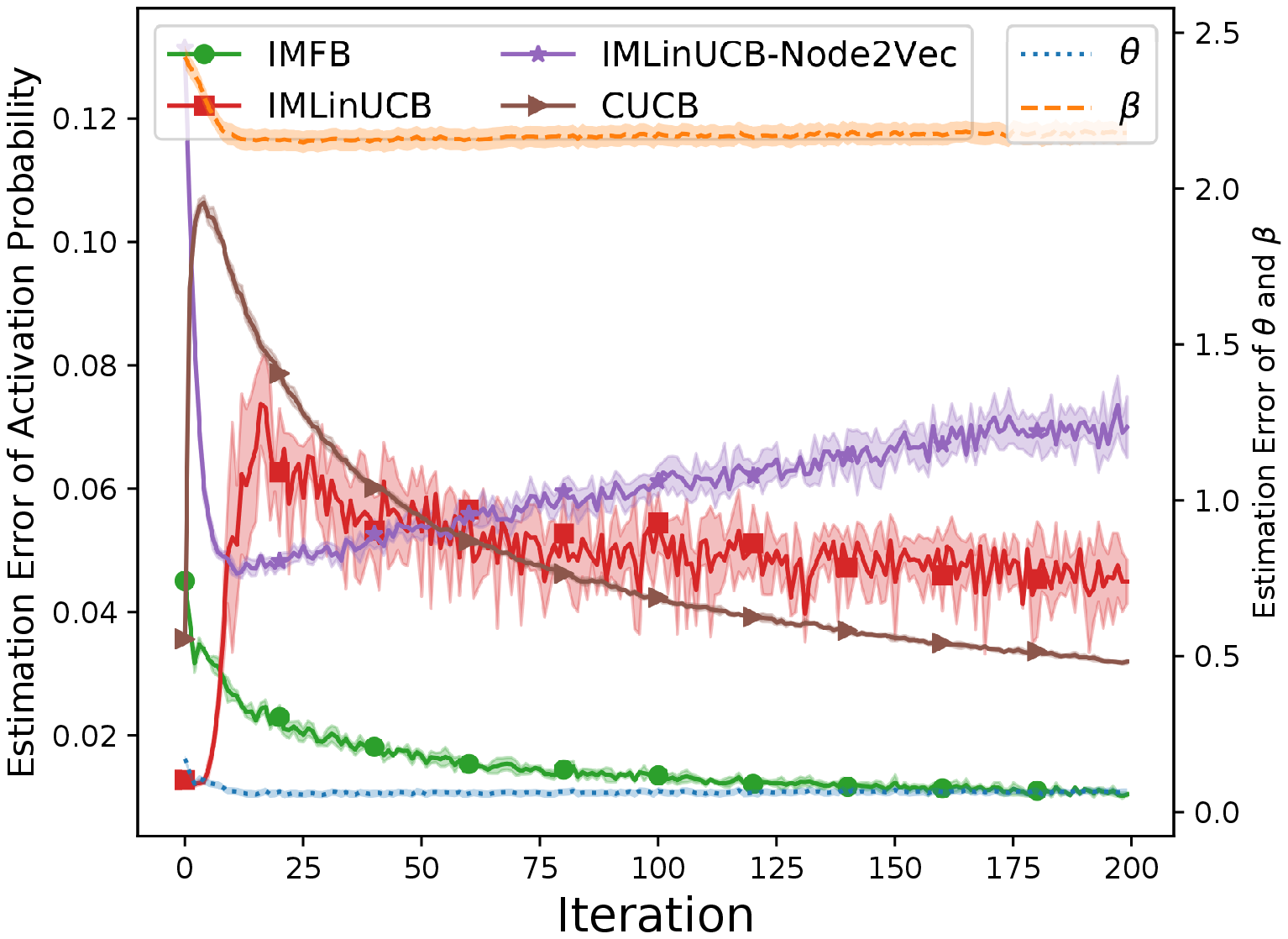} \\
(a) NetHEPT  &  (b) Flickr  \\
\end{tabular}
\vspace{-3mm}
\caption{Influence model estimation quality.}  \label{fig:loss}
\vspace{-4mm}
\end{figure}

From the results on NetHEPT in Figure \ref{fig:loss} (a), we can observe that, in addition to IMFB, the estimation error of activation probabilities in IMLinUCB also converge very fast but still worse than IMFB.  IMLinUCB-Node2Vec has a much worse activation probability estimation than all the other baselines. The unsatisfactory activation probability estimation in CUCB again verified our motivation of joint edge level parameter estimation. We can see that at around round 85, there is a sudden jump of its activation probability estimation quality. This is because at that round CUCB starts to explore edges that were rarely observed before. For those  edges, CUCB has to learn the activation probability from scratch, which leads to a sudden drop and a large variance in its estimation. This observation directly verify phenomenon shown in the illustration example in Figure \ref{fig:illu}, where we showed CUCB can only improve the estimation of activation probabilities on edges that are directly observed.

On the Flickr dataset in Figure \ref{fig:loss} (b), we observe similar estimation error pattern in CUCB at round interaction 15. Since this happens much earlier than that on the NetHEPT dataset, the performance of CUCB is not greatly affected after all. And on this dataset, IMFB converges the fastest and converges to an error that is much smaller than any other baselines'. Results on both these two dataset empirically confirmed the reduced sample complexity of IMFB.

In addition, we also visualize the average L2 difference between the estimated factors in IMFB and the ground-truth factors in the environment, i.e., $\lVert \hat \btheta_{v,t} - \btheta^*_v \rVert_2$ and $\lVert \hat \Beta{v,t} - \Beta^*_v \rVert_2$, on \emph{observed} nodes $v$. The results on both datasets are shown in Figure \ref{fig:loss} (a) \& (b) with dotted lines using the left side of the y-axis. We can find that within 25 iterations, IMFB's estimation error in both $\btheta$ and $\Beta$ reduces. As a result, its estimated activation probability $\hat p_{e,t}$ quickly converges to ground-truth probability, which corresponds to the convergence of the real time reward. The good estimation quality of activation probability in IMFB coincides our analysis in Lemma \ref{lemma:activation_prob_CB} regarding the confidence bound of activation probability estimation.

\subsection{Robustness in Influence Model Learning}
\subsubsection{Dimensionality Misspecification}
In our proposed solution, dimensionality of the latent influence factors and susceptibility factors need to be specified as the input. On the tested datasets, the dimensionality is set to 20 when generating the ground-truth activation probabilities. However, such information would be unknown to a learner in practice. Thus it is important to study the robustness of our proposed IMFB algorithm when dimensionality of latent factor is misspecified. Here we use IMFB-$d'$ to denote IMFB with latent dimension size $d'$. We vary $d'$ to study IMFB's robustness to dimensionality misspecification. Average reward on NetHEPT and Flickr dataset are reported in Table \ref{tab:robustness_dimension}. We also include results from all the other baselines for comparison. From this result we can clearly find that: firstly IMFB is very robust to dimensionality misspecification, specifically when $d'$ is set within a reasonable range. For example when $d'$ is in the range of 5 to 60, given the ground-truth dimension size is 20, the variance in the performance of IMFB is very small. Even when $d'$ is set to 1, IMFB can still maintain competitive performance. Secondly, by comparing IMFB with all  other baselines, we can see that even when the dimensionality is misspecified, IMFB can still significantly outperform all the baselines. This suggests specifying the correct influence structure is more important for online influence maximization.

\subsubsection{Different Environment Settings}
To examine our algorithm's robustness to different environment setups, we performed evaluation on the Flickr dataset by varying several important parameters of the environment, and report the results in Table ~\ref{tab:robust}.

\noindent\textbf{$\bullet$ Number of seed nodes $K$.} We vary the number of seed nodes $K$ from 50 to 1,000. We observe that for different settings of $K$, IMFB consistently performs better than other baselines. 

\noindent\textbf{$\bullet$ Noise level $\eta$.} Previous works in online influence maximization assumes the activation of edge $e$ is controlled by a fixed ground-truth activation probability $p_e$. 
In order to get a sense of our algorithm's robustness to additional noise on the edge activation, at each round of interaction we add an additional noise term $\eta$ on the ground-truth activation probabilities and then project the resulting noisy activation probabilities to $[0,1]$. From Table ~\ref{tab:robust} we observe that IMFB is robust to different noise levels and outperforms other baselines even when the noise scale is very large. 

\noindent\textbf{$\bullet$ Activation probability scale coefficient $c$.} To simulate some practical scenarios where the graph is generally more active or inactive, we re-scale the activation probability $p_e$ in our default setting by a constant coefficient $c$, i.e., activate edge $e$ with probability $c \cdot p_e$ , where $c$ is varied from $0.5$ to $2.0$. 
According to the results in Table ~\ref{tab:robust}, our proposed IMFB is robust to the choice of $c$ and generally improves over other baselines. We also observe that when $c=2.0$ the difference between all algorithms becomes much smaller. This is expected: when $c$ is large, the network is very active in every node, which makes the number of observations much larger which reduces necessity of observation propagation.  



\section{Conclusions \& Future Work}
 Motivated by the network assortative mixing property, we decompose the edge activation probability into giving node's influence and receiving node's susceptibility, and therefore perform node-level influence model estimation for better model complexity and generalization in online influence maximization. 
We provide rigorous theoretical analysis of IMFB, and show considerable regret reduction comparing with several state-of-the-art online maximization solutions. We also empirically evaluate on two real-world network datasets and confirm the effectiveness and robustness of IMFB. 

Our factorization based solution provides a flexible framework to introduce more desired properties, e.g., homophily and simultaneity, to influence modeling. Currently, we have assumed the availability of edge-level feedback from influence maximization. But in practice node-level feedback is more prevalent, we can incorporate more sophisticated inference methods based on our factor models to learn from such feedback.   

\section{Acknowledgements}
We thank the anonymous reviewers for their insightful comments.
This work was supported in part by National Science Foundation
Grant IIS-1553568 and IIS-1618948 and Bloomberg Data Science
PhD Fellowship. Dr. Wei Chen is partially supported by the National Natural Science Foundation of China (Grant NO.61433014).
\bibliographystyle{ACM-Reference-Format}
\bibliography{IM} 

\clearpage
\section{Supplement}
\subsection{Implementation of IMFB}
The implementation of our proposed solution IMFB is publicly available at \url{https://github.com/Matrix-Factorization-Bandit/IMFB-KDD2019}. 
\subsection{Proof of Theorem \ref{theorem:main_regretbound} and related lemmas}

To carry out our analysis,
we make the following mild conditions on the
expected reward $f_{\bP}(\cS)$. Note that these two conditions are the same as stated in \cite{wang2017improving}. 
\begin{itemize}
    \item \textbf{Monotonicity}. The expected reward of playing any super arm $S \in \bS$ is monotonically non-decreasing with respect to the expectation vector, i.e., if for all $i \in [m]$ $p_i \leq p_i'$ , we have $f_{\bP}(S) \leq f_{\bP'}(S)$ for all $S \in \bS$, in which $\cS$ is the set of all candidate super arms.
\end{itemize}

\begin{itemize}
    \item \textbf{1-Norm Bounded Smoothness}. We say that a CMAB-T (Combinatorial Multi-Armed Bandit with probabilistically Triggered arms) satisfy 1-norm bounded smoothness, if there exists a bounded smoothness constant $B \in \mathbb{R}^{+}$ such that for any two distributions with expectation vectors $\bP$ and $\bP'$, and any action $\cS$, we have $|f_{\bP}(\cS) - f_{\bP'}(\cS)| \leq B \sum_{i \in \tilde{\bS}} |p_i - p_i'|$, where $\tilde{\cS}$ is the set of arms that are triggered by $\cS$. 
\end{itemize}

\begin{proof}[Proof of Theorem \ref{theorem:main_regretbound}]
According to the definition of scaled regret, we have
\begin{align}
    \bbE[R_t^{\alpha \gamma}] & =  f_{\bP^*}(\cS^{opt}) - \bbE[ \frac{1}{\alpha \gamma} f_{\bP^*}(\cS_t)] \\  \nonumber
    & \leq \frac{1}{\alpha \gamma } \bbE[f_{\bP^*}(\cS^*) - f_{\bP^*}(\cS_t)]  \nonumber
\end{align}
where the expectation is over the possible randomness of $\cS_t$ since the oracle might be a randomized algorithm. Notice that the randomness coming from the edge activation is already taken care of in the definition of $r$ function. For any $t\leq T$, we define event  $\xi_{t}$ as,
\begin{align}
    \xi_{t-1}  = \{ & |p_e^* - \hat p_{e,t}| \leq  \alpha_{v_{g_e}}^{\beta} \lVert \hat \Beta_{g_e,i-1}\rVert_{\bA_{g_e,i-1}^{-1}} \\  \nonumber
    & + \alpha_{r_e}^{\theta} \lVert \hat \btheta_{v_{r_e},i-1}\rVert_{\bC_{r_e, i-1}^{-1}} + 2q^{2i}, \forall e \in \cE, \forall i \leq t\}
\end{align}
and $\bar \xi_{t-1}$ as the complement event of $\xi_{t-1}$. Hence we have,
\begin{align} \label{eq:regret_decomposition}
    \bbE[R_t^{\alpha \gamma}] & \leq \frac{ \bbP(\xi_{t-1})}{\alpha \gamma} \bbE[f_{\bP^*}(\cS^*) - f_{\bP^*}(\cS_t)|\xi_{t-1}] + \bbP(\bar \xi_{t-1}) (L-K) \\ \nonumber
    & \leq \frac{ \bbP(\xi_{t-1})}{\alpha \gamma} \bbE[f_{\bar \bP_t}(\cS^*) - f_{\bP^*}(\cS_t)|\xi_{t-1}] + \bbP(\bar \xi_{t-1}) (L-K) \\ \nonumber
    &  \leq \frac{ \bbP(\xi_{t-1})}{\alpha \gamma} \bbE[f_{\bar \bP_t}(\cS_t) - f_{\bP^*}(\cS_t)|\xi_{t-1}] + \bbP(\bar \xi_{t-1}) (L-K) \nonumber
\end{align}
in which the second inequality is based on the monotonicity of the expected reward function (since we proved that with high probability each element in $\bar \bP_t$ is an upper bound of the coressponding element in $\bP^*$. ), and the third inequality is based on the oracle's seed node selection strategy.

By the 1-Norm bounded smootheness of the reward function as stated in Condition 2, we have,
\begin{align} \label{eq:bounded_smoothness}
    f_{\bar \bP_t}(\bS_t) - f_{ \bP^*}(\bS_t) \leq B \sum_{e \in  \tilde \cE_t} |\bar p_{e,t} -  p_{e}^{*} ]|
\end{align}
in which $\tilde \cE_{t}$ is the set of observed edges at interaction $t$ (i.e. the set of triggered edges by $\cS_t$ at time $t$ if map it to the 1-Norm Bounded Smoothness condition). 

Substituting Eq \eqref{eq:bounded_smoothness} to Eq \eqref{eq:regret_decomposition} and combining the definition of cumulateive regret we have,
\begin{align}
   \text{R}^{\alpha \gamma}(T) & = \sum_{t=1}^T \bbE[R_i^{\alpha \gamma}] \\ \nonumber
   & \leq \frac{ B}{\alpha \gamma}  \sum_{t=1}^T \sum_{e \in \tilde \cE_{t}}   |\bar p_{e,t} -  p^*_{e}| + \bbP(\bar \xi_{t-1}) (L-K)
\end{align}
According to a proof similarly as in Lemma 2 of \cite{wen2017online},  the term $\bbP(\bar \xi_{t-1}) (L-K)$ can be bounded by $1$. Then combining the conclusion in Lemma \ref{lemma:sum_CB}, we have,
\begin{align}
   \textbf{R}^{\alpha \gamma}(T)  \leq    O\big( \frac{d B}{\alpha \gamma}  ( &  \sqrt{T |\cV^{\giving}| |\cV| D^{\textout}  \log (TD^{\textout})} \\ \nonumber 
    & + \sqrt{T |\cV^{\receiving}| |\cV| D^{\textin}  \log (TD^{\textin})}) \big)
\end{align}

\end{proof}


\begin{lemma} \label{lemma:sum_CB}
Denote  $\cV_{S_t}^{\receiving}$ as the set of observed giving nodes at time $t$ under seed node set $\cS_t$, $\cV_{S_t}^{\receiving}$ as the set of receiving nodes at time $t$ under seed node set $\cS_t$, we have with high probability,
\begin{align}
    & \sum_{t=1}^T \sum_{e \in \tilde \cE_{t}}   |\bar p_{e,t} -  p^*_{e}|  \\ \nonumber
    \leq  O( & d \sqrt{T |\cV^{\giving}| |\cV| D^{\textout}  \log (TD^{\textout})} \\ \nonumber 
    & + d \sqrt{T |\cV^{\receiving}| |\cV| D^{\textin}  \log (TD^{\textin})})
\end{align}
\end{lemma}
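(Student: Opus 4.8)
The plan is to turn the summed deviations $\sum_t\sum_{e\in\tilde\cE_t}|\bar p_{e,t}-p^*_e|$ into a sum of confidence widths, and then to control that sum by a log-determinant (elliptical-potential) argument carried out separately for the influence and susceptibility factors. \textbf{Step 1 (reduction to confidence widths).} On the good event $\xi_{t-1}$ of Lemma~\ref{lemma:activation_prob_CB}, I would first show $|\bar p_{e,t}-p^*_e|\le 2\,\mathrm{CB}_{e,t}$ for every observed edge: since $\bar p'_{e,t}=\hat p_{e,t}+\mathrm{CB}_{e,t}$ is a valid upper bound of $p^*_e$ while $|\hat p_{e,t}-p^*_e|\le\mathrm{CB}_{e,t}$, we get $0\le\bar p'_{e,t}-p^*_e\le 2\,\mathrm{CB}_{e,t}$, and projecting onto $[0,1]$ (which contains $p^*_e$) only shrinks the gap. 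This reduces the target to $2\sum_t\sum_{e\in\tilde\cE_t}\mathrm{CB}_{e,t}$.

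\textbf{Step 2 (splitting).} I would split $\mathrm{CB}_{e,t}=B^{\btheta}_{g_e,t}+B^{\Beta}_{r_e,t}+2q^{2t}$ into a giving-node part, a receiving-node part and a geometric remainder. The remainder is negligible, $\sum_{t\ge1}\sum_{e\in\tilde\cE_t}2q^{2t}\le 2|\cE|q^2/(1-q^2)=O(|\cE|)$, and the two structural parts are symmetric, so I treat only $\sum_t\sum_{e\in\tilde\cE_t}B^{\btheta}_{g_e,t}$. \textbf{Step 3 (Cauchy--Schwarz and potential).} After bounding each confidence radius by $\alpha_{\max}=O(\sqrt{d\log(TD^{\textout})})$, which follows from the $\log\det$ form of $\alpha$ in Lemma~\ref{lemma:activation_prob_CB}, it remains to bound a sum of weighted norms $\lVert\cdot\rVert_{M^{-1}_{v,t}}$ of the partner factor. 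Cauchy--Schwarz over all observed edge-occurrences factors this as $\sqrt{N}\cdot\sqrt{\sum\lVert\cdot\rVert^2_{M^{-1}}}$ with $N=\sum_t|\tilde\cE_t|\le T|\cV^{\giving}|D^{\textout}$ (at most $|\cV^{\giving}|$ observed giving nodes per round, each of out-degree $\le D^{\textout}$). For the squared-norm sum I would regroup by giving node $v$, obtain $O(d\log(TD^{\textout}))$ per node from the log-determinant lemma, and sum over the at most $|\cV|$ distinct giving nodes to get $O(|\cV|d\log(TD^{\textout}))$. The product of the three pieces gives $O(d\sqrt{T|\cV^{\giving}||\cV|D^{\textout}\log(TD^{\textout})})$, and the receiving-node term contributes the symmetric expression in $D^{\textin},|\cV^{\receiving}|$.

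\textbf{Main obstacle.} The delicate step is the potential argument. Per Eq~\eqref{eq:model_update_stat1}, the Gram matrix $\bA_{v,t}$ is rebuilt every round from the \emph{current} estimates $\hat\Beta_{r_e,t}$ of all past observations, rather than from a fixed, monotonically growing set of vectors; the textbook log-determinant lemma therefore does not apply verbatim. My plan is to compare $\bA_{v,t}$ with the ``clean'' matrix $\lambda_2\bI+\sum\Beta^{*}_{r_e}\Beta^{*\mt}_{r_e}$ built from the ground-truth factors, to which the potential lemma applies directly, and to control the perturbation through the $q$-linear convergence $\lVert\hat\Beta_{r_e,t}-\Beta^{*}_{r_e}\rVert=O(q^{t})$ guaranteed by the regularized coordinate descent \cite{LocalConvergenceALS,wang2016learning,wang2017factorization}; the accumulated residuals telescope into a constant because $\sum_t q^{t}<\infty$, which is precisely the role played by the $2q^{2t}$ correction terms. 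A secondary nuisance is that up to $D^{\textout}$ rank-one terms enter per round for a single node, so a batched version of the potential lemma (costing at most a constant factor) is needed. Establishing the convergence rate and propagating it cleanly through the weighted-norm bounds is the heaviest part of the argument.
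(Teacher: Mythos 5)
Your proposal follows essentially the same route as the paper's own proof: on the good event, bound $|\bar p_{e,t}-p^*_e|$ by the confidence widths from Lemma~\ref{lemma:activation_prob_CB}, split the sum into a giving-node part, a receiving-node part and a geometric $q$-remainder (which both you and the paper dismiss as a constant in $T$), then apply Cauchy--Schwarz followed by a per-node elliptical-potential bound (the paper's Lemma~\ref{lemma:CB_square_sum_bound}) summed over at most $|\cV|$ nodes, with $\alpha_v$ replaced by its maximum $O(\sqrt{d\log(TD^{\textout})})$. Two points of divergence are worth recording. First, the bookkeeping of $D^{\textout}$: you charge it in the occurrence count $N\le T|\cV^{\giving}|D^{\textout}$ of the Cauchy--Schwarz step \emph{and} assert that the batched potential lemma costs only a constant factor, whereas the paper runs Cauchy--Schwarz over node-rounds ($T|\cV^{\giving}|$ terms) and places the $D^{\textout}$ factor inside Lemma~\ref{lemma:CB_square_sum_bound}. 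These two accountings cannot be mixed: batching genuinely costs a multiplicative factor of the batch size (up to $D^{\textout}$, not a constant, unless the regularizer dominates $D^{\textout}$), so if $D^{\textout}$ appears both in your $N$ and in the potential bound, your final expression overshoots the lemma by $\sqrt{D^{\textout}}$; the factor must be charged exactly once, as in the paper. Second, your ``main obstacle'' paragraph confronts a real issue that the paper silently skips: by Eq~\eqref{eq:model_update_stat1}, $\bA_{v,t}$ is rebuilt every round from the \emph{current} estimates $\hat\Beta_{r_e,t}$ of all past observations, so the fixed-feature potential lemma of Wen et al.\ does not apply verbatim, yet the paper's proof of Lemma~\ref{lemma:CB_square_sum_bound} consists only of a citation to that lemma. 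Your proposed repair --- compare $\bA_{v,t}$ to the Gram matrix built from the ground-truth factors and absorb the perturbation via the $q$-linear convergence of the coordinate-descent estimates, whose accumulated residuals are summable --- is a sensible way to close this gap and is more careful than what the paper itself provides.
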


\begin{proof} [Proof of Lemma \ref{lemma:sum_CB}]
\begin{align}
\small
    &\sum_{t=1}^T \sum_{e \in \tilde \cE_{t}}   |\bar p_{e,t} -  p_{e}^{*}| \\ \nonumber
    & \leq \sum_{t=1}^T \sum_{e \in \tilde \cE_{t}} ( \alpha_{v_{g_e}}^{\beta} \lVert \hat \Beta_{g_e,i-1}\rVert_{\bA_{g_e,i-1}^{-1}}  + \alpha_{r_e}^{\theta} \lVert \hat \btheta_{v_{r_e},i-1}\rVert_{\bC_{v_{r_e}, i-1}^{-1}} + 2q^t ) \\
    & = \sum_{t=1}^T \sum_{u \in \cV_{\cS_t,}^{\giving}} \alpha_{u}^{\beta} \lVert \hat \Beta_{u,t}\rVert_{\bA_{u,t}^{-1}}  + \sum_{t=1}^T \sum_{v \in \cV_{\cS_t,}^{\receiving}}  \alpha_{v}^{\theta} \lVert \hat \btheta_{v,i-1}\rVert_{\bC_{v, i-1}^{-1}} + \sum_{t=1}^T |\tilde  \cE_{t}| 2q^t ) \\ \nonumber
    & \leq \alpha^{\beta} \sum_{t=1}^T \sum_{u \in \cV_{\cS_t,}^{\giving}}  \lVert \hat \Beta_{u,t}\rVert_{\bA_{u,t}^{-1}}  + \alpha^{\theta} \sum_{t=1}^T \sum_{v \in \cV_{\cS_t,}^{\receiving}} \lVert \hat \btheta_{v,i-1}\rVert_{\bC_{v, i-1}^{-1}} + \sum_{t=1}^T |\tilde \cE_{t}| 2q^t )
\normalsize
\end{align}
in which $\alpha^{\Beta} = \max_{v \in \cV} \alpha_v^{\Beta} $ and $\alpha^{\btheta}  = \max_{v \in \cV}  \alpha_{v}^{\btheta}$.

\begin{align}
   \sum_{t=1}^T  \sum_{u \in \cV_{\cS_t,}^{\giving}}  \lVert \hat \Beta_{u,t}\rVert_{\bA_{u,t}^{-1}} & \leq \sqrt{  T |\cV^{\giving}|  \sum_{t=1}^T  \sum_{u \in \cV_{\cS_t,}^{\giving}}  \lVert \hat \Beta_{u,t}\rVert^2_{\bA_{u,t}^{-1}}} \\ \nonumber
    & \leq \sqrt{T |\cV^{\giving}| |\cV|d \frac{D^{\textout} \log (1 + \frac{T D^{\textout}}{d \delta^2})}{\log(1 + 1/\delta^2)} }
\end{align}
in which $|\cV^{\giving}|$ is the maximum number of observed giving nodes at any time point, and the last inequality is based on Lemma \ref{lemma:CB_square_sum_bound}. 
and similarly we have,
\begin{align}
   \sum_{t=1}^T  \sum_{v \in \cV_{\cS_t,}^{\receiving}}  \lVert \hat \btheta_{v,t}\rVert_{\bC_{v,t}^{-1}} &  \leq \sqrt{  T |\cV^{\receiving}|  \sum_{t=1}^T  \sum_{v \in \cV_{\cS_t,}^{\receiving}}  \lVert \hat \btheta_{v,t}\rVert^2_{\bC_{v,t}^{-1}}} \\ \nonumber
    & \leq \sqrt{T |\cV^{\receiving}| |\cV|d  \frac{D^{\textin} \log (1 + \frac{T D^{\textin}}{d \delta^2})}{\log(1 + 1/\delta^2)}}
\end{align}
in which $|\cV^{\receiving}|$ is the maximum number of observed receiving nodes at any time point, and the last inequality is based on Lemma \ref{lemma:CB_square_sum_bound}. 

Then we obtain that,
\begin{align}
    &\sum_{t=1}^T \sum_{e \in \cE_{\cS_t}}   |\bar p_{e,t} -  p_{e}^{*}| \\ \nonumber
    & \leq  \alpha^{\Beta} \sqrt{T |\cV^{\giving}| |\cV|d \frac{D^{\textout} \log (1 + \frac{T D^{\textout}}{d \delta^2})}{\log(1 + 1/\delta^2)} } \\ \nonumber
   &  + \alpha^{\btheta} \sqrt{T |\cV^{\receiving}| |\cV|d  \frac{D^{\textout} \log (1 + \frac{T D^{\textin}}{d \delta^2})}{\log(1 + 1/\delta^2)}}  +  |\cE_{O}| \frac{2q(1-q^T)}{1-q} \\ \nonumber
 \leq O( & d \sqrt{T |\cV^{\giving}| |\cV| D^{\textout}  \log (TD^{\textin})} \\ \nonumber
     & + d \sqrt{T |\cV^{\receiving}| |\cV| D^{\textin}  \log (TD^{\textin})})
\end{align}
in which $\alpha^{\Beta} = \sqrt{d \log ( \frac{ d + T D^{\textout}}{d \delta^2 \lambda_1 })} + \frac{\lambda_1(1-q) + 2q}{\sqrt{\lambda_1}(1-q)}$, and $\alpha^{\btheta} = \sqrt{d \log (\frac{d+T D^{\textin}}{d \delta^2 \lambda_2}) } + \frac{\lambda_2(1-q) + 2q}{\sqrt{\lambda_2}(1-q)}$ and $|\cE_O|$ is the maximum number of observed edges at any interaction up to time $T$.

\end{proof}

\begin{lemma} \label{lemma:CB_square_sum_bound}
Denote  $\cV_{S_t}^{\giving}$ as the set of observed giving nodes at time $t$ under seed node set $\cS_t$, $\cV_{S_t}^{\receiving}$ as the set of receiving nodes at time $t$ under seed node set $\cS_t$, we have
\begin{align}
 \sum_{t=1}^T  \sum_{u \in \cV_{\cS_t,}^{\giving}}  \lVert \hat \Beta_{u,t}\rVert^2_{\bA_{u,t}^{-1}}  \leq  |\cV| \frac{d D^{\textout} \log (1 + \frac{T D^{\textout}}{d \delta^2})}{\log(1 + 1/\delta^2)} 
\end{align}
and 
\begin{align}
 \sum_{t=1}^T  \sum_{v \in \cV_{\cS_t,}^{\receiving}}  \lVert \hat \btheta_{v,t}\rVert^2_{\bC_{v,t}^{-1}} \leq |\cV|  \frac{d D^{\textin} \log (1 + \frac{T D^{\textin}}{d \delta^2})}{\log(1 + 1/\delta^2)} 
\end{align}
\end{lemma}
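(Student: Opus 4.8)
The plan is to prove both inequalities by the same argument — a node-wise application of the elliptical-potential (determinant--trace) lemma — and to obtain the second inequality from the first by exchanging the roles of giving/receiving nodes, influence/susceptibility factors, and out-/in-degree. I focus on the first inequality. The first step is to split the double sum over rounds and observed giving nodes into a sum over individual nodes: since every summand is nonnegative and each node $u$ acts as an observed giving node in at most $T$ rounds,
\[
\sum_{t=1}^T\sum_{u\in\cV^{\giving}_{\cS_t}}\|\hat\Beta_{u,t}\|^2_{\bA_{u,t}^{-1}}\;\le\;\sum_{u\in\cV}\;\sum_{t:\,u\in\cV^{\giving}_{\cS_t}}\|\hat\Beta_{u,t}\|^2_{\bA_{u,t}^{-1}},
\]
so it suffices to bound the inner sum for a fixed $u$ by $\frac{dD^{\textout}\log(1+TD^{\textout}/(d\delta^2))}{\log(1+1/\delta^2)}$ and then multiply by the at most $|\cV|$ contributing nodes.

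For a fixed $u$ I would work with the Gram matrices $\bA_{u,t}$ of Eq.~\eqref{eq:model_update_stat1}, which grow by the rank-one outer products of the (bounded-norm) feature vectors attached to $u$'s observed out-edges, and which satisfy $\bA_{u,t}\succeq\lambda_2\bI$ with $\lambda_2=\delta^2$, so each squared Mahalanobis norm is at most $1/\delta^2$. The core step is the determinant--trace inequality: using $x\le\frac{c}{\log(1+c)}\log(1+x)$ on $[0,c]$ with $c=1/\delta^2$ turns each per-round contribution into a log-determinant increment, and telescoping over $t$ collapses the sum into $\frac{1}{\log(1+1/\delta^2)}\log\frac{\det\bA_{u,T+1}}{\det(\lambda_2\bI)}$. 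I then bound the log-determinant by $d\log(1+TD^{\textout}/(d\delta^2))$ via $\det\bA_{u,T+1}\le(\mathrm{tr}\,\bA_{u,T+1}/d)^d$, together with the observation that $u$ receives at most $D^{\textout}$ rank-one updates per round over at most $T$ rounds, i.e.\ at most $TD^{\textout}$ updates in total.

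The main obstacle is the within-round \emph{batching}: in a single round up to $D^{\textout}$ out-edges of $u$ are observed at once, yet in the sum all of them are measured against the same matrix $\bA_{u,t}$ taken at the \emph{start} of the round, before any of these simultaneous updates are folded in. The vanilla elliptical-potential lemma processes one update at a time against the matrix immediately preceding it, so it does not apply verbatim. I would resolve this by refining the timeline within the round, writing $\bA_{u,t}^{(0)}=\bA_{u,t}\preceq\bA_{u,t}^{(1)}\preceq\cdots\preceq\bA_{u,t}^{(m_t)}=\bA_{u,t+1}$ for the $m_t\le D^{\textout}$ ordered updates, and noting that each intermediate matrix differs from $\bA_{u,t}$ by at most $D^{\textout}$ rank-one terms, which lets me replace $\bA_{u,t}^{-1}$ by $(\bA_{u,t}^{(j-1)})^{-1}$ at the cost of a multiplicative factor $D^{\textout}$. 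Applying the fine-grained elliptical potential on the refined sequence $\{\bA_{u,t}^{(j)}\}$ and paying this factor of $D^{\textout}$ yields the per-node bound; summing over $u\in\cV$ supplies the $|\cV|$ prefactor. The second inequality follows identically after swapping $(\Beta,\bA,D^{\textout})$ for $(\btheta,\bC,D^{\textin})$.
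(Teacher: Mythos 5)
Your overall skeleton---splitting the double sum node by node to extract the $|\cV|$ factor, then running a determinant--trace (elliptical potential) argument per node over at most $TD^{\textout}$ rank-one updates---is exactly the route the paper intends (its ``proof'' simply defers to Lemma~1 of \cite{wen2017online}), and you correctly isolate the real difficulty: all edges observed in a round are measured against the \emph{start-of-round} matrix $\bA_{u,t}$. But your resolution of that difficulty fails. Replacing $\bA_{u,t}^{-1}$ by $(\bA_{u,t}^{(j-1)})^{-1}$ costs the factor $\lambda_{\max}\bigl(\bA_{u,t}^{-1/2}\,\bA_{u,t}^{(j-1)}\,\bA_{u,t}^{-1/2}\bigr)$, and adding at most $D^{\textout}$ rank-one terms to $\bA_{u,t}$ makes this factor as large as $1+\sum_{i<j}\lVert x_{t,i}\rVert^2_{\bA_{u,t}^{-1}}$, which is of order $1+(D^{\textout}-1)/\lambda_2$, \emph{not} $D^{\textout}$. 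This is tight: take $\bA_{u,t}=\lambda_2\bI$ and let all within-round vectors equal one unit vector $x$; then $\lVert x\rVert^2_{\bA_{u,t}^{-1}}=1/\lambda_2$ while, by Sherman--Morrison, $\lVert x\rVert^2_{(\bA_{u,t}^{(m-1)})^{-1}}=1/(\lambda_2+m-1)$, a ratio of $1+(m-1)/\lambda_2$. Since you identify $\lambda_2=\delta^2<1$, the true cost is roughly $D^{\textout}/\delta^2\gg D^{\textout}$. There is also an accounting slip: the inequality $x\le\frac{c}{\log(1+c)}\log(1+x)$ carries the numerator $c=1/\delta^2$, so your telescoped bound is $\frac{1/\delta^2}{\log(1+1/\delta^2)}\log\frac{\det\bA_{u,T+1}}{\det(\lambda_2\bI)}$, not $\frac{1}{\log(1+1/\delta^2)}\log\frac{\det\bA_{u,T+1}}{\det(\lambda_2\bI)}$. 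Stacking that per-term cap \emph{and} any batch-replacement factor multiplicatively overshoots the lemma's explicit constants, so the stated inequality is not established by your argument.

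The correct handling of batching---and what the cited Lemma~1 of \cite{wen2017online} does---is to never replace the matrix at all. For positive semidefinite $S$ one has $\det(\bI+S)\ge 1+\mathrm{tr}(S)$, so taking $S=\bA_{u,t}^{-1/2}\bigl(\sum_{j} x_{t,j}x_{t,j}^\mt\bigr)\bA_{u,t}^{-1/2}$, where $x_{t,1},\dots,x_{t,m_t}$ ($m_t\le D^{\textout}$) are the vectors folded into $\bA_{u,\cdot}$ in round $t$, gives
\begin{align*}
\frac{\det \bA_{u,t+1}}{\det \bA_{u,t}} \;\ge\; 1+\sum\nolimits_{j}\lVert x_{t,j}\rVert^2_{\bA_{u,t}^{-1}} \;=\; 1+z_t ,
\end{align*}
i.e., the \emph{entire} round's batch is charged against the start-of-round matrix in one shot, which is exactly the quantity appearing in the lemma. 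Then concavity of $z\mapsto\log(1+z)$ on $[0,D^{\textout}c]$ (with $c$ the per-term cap) gives $z_t\le\frac{D^{\textout}c}{\log(1+D^{\textout}c)}\log(1+z_t)\le\frac{D^{\textout}c}{\log(1+c)}\log(1+z_t)$; summing over $t$, telescoping the log-determinants, and applying your trace bound on $\log\det\bA_{u,T+1}$ yields the per-node bound with the single factor $D^{\textout}$ entering once, through the concavity range, after which summing over at most $|\cV|$ nodes finishes. (To match the lemma's constants exactly, with per-term cap $c=1$ rather than $1/\delta^2$, one needs the normalization of \cite{wen2017online}, i.e., Gram matrices initialized at $\bI$ with updates scaled by $\delta^{-2}$; under the paper's own Eq.~\eqref{eq:model_update_stat1} an extra $1/\lambda_2$ factor appears. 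That discrepancy is the paper's, not yours, but the batching step above is the idea your proof is missing.)
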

\begin{proof} [Proof of Lemma \ref{lemma:CB_square_sum_bound}]
The proof of this lemma is similar as Lemma 1 in \cite{wen2017online}. 
\end{proof}

\begin{proof}[Proof of Lemma \ref{lemma:activation_prob_CB}]
The proof of inequalities $\lVert \hat \btheta_{v,t} - \btheta_v^* \rVert_{\bA_{v,t}} \leq \alpha_{v}^{\Beta}$ and $\lVert \hat \Beta_{v,t} - \Beta_v^*  \rVert_{\bC_{v,t}} \leq \alpha_{v}^{\btheta}$ are technically similar as the proof in Lemma 1 in \cite{wang2016learning}, and thus omitted here. 
For the simplicity of notations, we denote the head node and tail node of edge $e$ as $u$ and $v$ respectively.
\begin{align}
   & p^*_{e} -\hat p_{e, t} \\ \nonumber
   & = {\btheta_{u}^*}^\mt \Beta_{v}^* - \hat \btheta_{u,t}^\mt \hat \Beta_{v,t} \\ \nonumber
   & = {\btheta_{u}^*}^\mt\Beta_{v}^* - {\btheta^*_{u}}^\mt \hat \Beta_{v,t} + {\btheta^*_{u}}^\mt \hat \Beta_{v,t}  - \hat \btheta_{u}^\mt \hat \Beta_{v,t} \\ \nonumber
   & = {\btheta_{u}^*}^\mt (\Beta_{v}^* -  \hat \Beta_{v,t}) + (\btheta^*_{u}   - \hat \btheta_{u,t} )^\mt\hat \Beta_{v,t} \\ \nonumber
   & = (\btheta_{u}^*-\hat\btheta_{u,t})^\mt (\Beta_{v}^* -  \hat \Beta_{v,t }) + \hat\btheta_{u}^\mt (\Beta_{v}^* -  \hat \Beta_{v,t}) + (\btheta^*_{u}   - \hat \btheta_{u,t} )^\mt \hat \Beta_{v,t}  \\ \nonumber
   & \leq  \lVert \btheta_{u}^*-\hat \btheta_{u,t} \rVert \lVert \Beta_{v}^* -  \hat \Beta_{v,t} \rVert + \lVert \hat\btheta_{u,t} \rVert \lVert \Beta_{v}^* -  \hat \Beta_{v,t}\rVert + \lVert \btheta^*_{u}   - \hat \btheta_{u,t} \rVert \lVert \hat \Beta_{v,t} \rVert \\ \nonumber
   & \leq \lVert \btheta_{u}^*-\hat\btheta_{u,0} \rVert \lVert \Beta_{v}^* -  \hat \Beta_{v,0} \rVert q^{2t} \\ \nonumber
   & +  \lVert \Beta_{v}^* -  \hat \Beta_{v,t} \rVert_{C_{u,t}} \lVert \hat\btheta_{u,t} \rVert_{C_{u,t}^{-1}} + \lVert \btheta^*_{u}   - \hat \btheta_{u,t}  \rVert_{A_{u,t}} \lVert \hat \Beta_{v,t} \rVert_{A^{-1}_{u,t}} \\ \nonumber
   & \leq 2q^{2t} + \alpha^{\theta}_{u}  \lVert \hat \btheta_{u,t} \rVert_{C_{v,t}^{-1}} + \alpha^{\beta}_{v} \lVert \hat \Beta_{v,t} \rVert_{A^{-1}_{u,t}}
\end{align}
in which $q$ is a constant in the range of $(0,1)$, and the second inequality is based on the q-linear
convergence rate of parameter estimation \cite{LocalConvergenceALS, wang2016learning}.
\end{proof}

\end{document}